\title{Stochastic Process Bandits: Upper Confidence Bounds Algorithms via Generic Chaining}
\author{Emile Contal}
\author{Nicolas Vayatis}
\affil{CMLA, ENS Cachan, CNRS, Universit\'e Paris-Saclay, 94235 Cachan, France}
\date{}
\begin{document}

\maketitle

\begin{abstract}
The paper considers the problem of global optimization in the setup of stochastic process bandits.
We introduce an UCB algorithm which builds a cascade of discretization trees based on generic chaining
in order to render possible his operability over a continuous domain.
The theoretical framework applies to functions
under weak probabilistic smoothness assumptions
and also extends significantly the spectrum of application of UCB strategies.
Moreover generic regret bounds are derived which are then specialized to Gaussian processes
indexed on infinite-dimensional spaces as well as to quadratic forms of Gaussian processes.
Lower bounds are also proved in the case of Gaussian processes to assess the optimality of the proposed algorithm.
\end{abstract}

\section{Introduction}
Among the most promising approaches to address the issue of global optimization
of an unknown function under reasonable smoothness assumptions
comes from extensions of the multi-armed bandit setup.
\cite{Bubeck2009} highlighted the connection between cumulative regret
and simple regret which facilitates fair comparison between methods
and \cite{Bubeck2011} proposed  bandit algorithms on metric space $\cX$, called $\cX$-armed bandits.
In this context, theory and algorithms have been developed
in the case where the expected reward is a function $f:\cX\to\bR$
which satisfies certain smoothness conditions such as Lipschitz or H\"older continuity
\citep{Kleinberg2004, Kocsis2006, Auer2007, Kleinberg2008, Munos2011}.
Another line of work is the Bayesian optimization framework \citep{Jones1998, Bull2011, Mockus2012}
for which the unknown function $f$ is assumed to be the realization of a prior stochastic process distribution,
typically a Gaussian process.
An efficient algorithm that can be derived in this framework is the popular GP-UCB algorithm
due to \cite{Srinivas2012}.
However an important limitation of the upper confidence bound (UCB) strategies
without smoothness condition
is that the search space has to be {\em finite} with bounded cardinality,
a fact which is well known but, up to our knowledge,
has not been discussed so far in the related literature.

In this paper, we propose an approach which improves both lines of work with respect to their present limitations.
Our purpose is to: (i) relax smoothness assumptions that limit the relevance
of $\cX$-armed bandits in practical situations where target functions may only display random smoothness,
(ii) extend the UCB strategy for arbitrary sets $\cX$.
Here we will assume that $f$, being the realization of a given stochastic process distribution,
fulfills a \emph{probabilistic smoothness} condition.
We will consider the stochastic process bandit setup and we develop a UCB algorithm
based on {\em generic chaining} \citep{Bogachev1998,Adler2009,Talagrand2014,Gine2015}.
Using the generic chaining construction,
we compute hierarchical discretizations of $\cX$ under the form of chaining trees
in a way that permits to control precisely the discretization error.
The UCB algorithm then applies on these successive discrete subspaces
and chooses the accuracy of the discretization at each iteration
so that the cumulative regret it incurs matches the state-of-the art bounds on finite $\cX$.
In the paper, we propose an algorithm which computes a generic chaining tree
for arbitrary stochastic process in quadratic time.
We show that this tree is optimal for classes like Gaussian processes with high probability. 
Our theoretical contributions have an impact in the two contexts mentioned above.
From the bandit and global optimization point of view,
we provide a generic algorithm that incurs state-of-the-art regret on stochastic process objectives
including non-trivial functionals of Gaussian processes such as
the sum of squares of Gaussian processes (in the spirit of mean-square-error minimization), 
or nonparametric Gaussian processes on ellipsoids (RKHS classes),
or the Ornstein-Uhlenbeck process, which was conjectured impossible by \citep{Srinivas2010} and \citep{Srinivas2012}.
From the point of view of Gaussian process theory,
the generic chaining algorithm leads to tight bounds on the supremum of the process
in probability and not only in expectation.

The remainder of the paper is organized as follows.
In Section~\ref{sec:framework}, we present the stochastic process bandit framework over continuous spaces.
Section~\ref{sec:chaining} is devoted to the construction of generic chaining trees for search space discretization.
Regret bounds are derived in Section~\ref{sec:regret} after choosing adequate discretization depth.
Finally, lower bounds are established in Section~\ref{sec:lower_bound}.

\section{Stochastic Process Bandits Framework}
\label{sec:framework}

We consider the optimization of an unknown function $f:\cX\to\bR$
which is assumed to be sampled from a given separable stochastic process distribution.
The input space $\cX$ is an arbitrary space not restricted to subsets of $\bR^D$,
and we will see in the next section how the geometry of $\cX$ for a particular metric
is related to the hardness of the optimization.
An algorithm iterates the following:
\begin{itemize}
\item it queries $f$ at a point $x_i$ chosen with the previously acquired information,
\item it receives a noisy observation $y_i=f(x_i)+\epsilon_t$,
\end{itemize}
where the $(\epsilon_i)_{1\le i \le t}$ are independent centered Gaussian $\cN(0,\eta^2)$ of known variance.
We evaluate the performances of such an algorithm using $R_t$ the cumulative regret:
\[R_t = t\sup_{x\in\cX}f(x) - \sum_{i=1}^t f(x_i)\,.\]
This objective is not observable in practice,
and our aim is to give theoretical upper bounds that hold with arbitrary high probability
in the form:
\[\Pr\big[R_t \leq g(t,u)\big] \geq 1-e^{-u}\,.\]
Since the stochastic process is separable, the supremum over $\cX$ can be replaced by
the supremum over all finite subsets of $\cX$ \citep{Boucheron2013}.
Therefore we can assume without loss of generality that $\cX$ is finite with arbitrary cardinality.
We discuss on practical approaches to handle continuous space in Appendix~\ref{sec:greedy_cover}.
Note that the probabilities are taken under the product space of both the stochastic process $f$ itself
and the independent Gaussian noises $(\epsilon_i)_{1\le i\le t}$.
The algorithm faces the exploration-exploitation tradeoff.
It has to decide between reducing the uncertainty on $f$
and maximizing the rewards.
In some applications one may be interested in finding the maximum of $f$ only,
that is minimizing $S_t$ the simple regret:
\[S_t = \sup_{x\in\cX}f(x) - \max_{i\leq t}f(x_i)\,.\]
We will reduce our analysis to this case by simply observing that $S_T\leq \frac{R_T}{T}$.

\paragraph{Confidence Bound Algorithms and Discretization.}
To deal with the uncertainty,
we adopt the \emph{optimistic optimization} paradigm
and compute high confidence intervals where the values $f(x)$ lie with high probability,
and then query the point maximizing the upper confidence bound \citep{Auer2002}.
A naive approach would use a union bound over all $\cX$
to get the high confidence intervals at every points $x\in\cX$.
This would work for a search space with fixed cardinality $\abs{\cX}$,
resulting in a factor $\sqrt{\log\abs{\cX}}$ in the Gaussian case,
but this fails when $\abs{\cX}$ is unbounded,
typically a grid of high density approximating a continuous space.
In the next section,
we tackle this challenge by employing {\em generic chaining} to build hierarchical discretizations of $\cX$.

\section{Discretizing the Search Space via Generic Chaining}
\label{sec:chaining}

\subsection{The Stochastic Smoothness of the Process}
Let $\ell_u(x,y)$ for $x,y\in\cX$ and $u\geq 0$ be the following confidence bound on the increments of $f$:
\[\ell_u(x,y) = \inf\Big\{s\in\bR: \Pr[f(x)-f(y) > s] < e^{-u}\Big\}\,.\]
In short, $\ell_u(x,y)$ is the best bound satisfying $\Pr\big[f(x)-f(y) \geq \ell_u(x,y)\big] < e^{-u}$.
For particular distributions of $f$, it is possible to obtain closed formulae for $\ell_u$.
However, in the present work we will consider upper bounds on $\ell_u$.
Typically, if $f$ is distributed as a centered Gaussian process of covariance $k$,
which we denote $f\sim\cGP(0,k)$, we know that $\ell_u(x,y) \leq \sqrt{2u}d(x,y)$,
where $d(x,y)=\big(\E(f(x)-f(y))^2\big)^{\frac 1 2}$ is the canonical pseudo-metric of the process.
More generally, if it exists a pseudo-metric $d(\cdot,\cdot)$ and a function $\psi(\cdot,\cdot)$
bounding the logarithm of the moment-generating function of the increments, that is,
\[\log \E e^{\lambda(f(x)-f(y))} \leq \psi(\lambda,d(x,y))\,,\]
for $x,y\in\cX$ and $\lambda\in I \subseteq \bR$,
then using the Chernoff bounding method \citep{Boucheron2013},
\[\ell_u(x,y) \leq \psi^{*-1}(u,d(x,y))\,,\]
where $\psi^*(s,\delta)=\sup_{\lambda\in I}\big\{\lambda s - \psi(\lambda,\delta)\big\}$
is the Fenchel-Legendre dual of $\psi$
and $\psi^{*-1}(u,\delta)=\inf\big\{s\in\bR: \psi^*(s,\delta)>u\big\}$
denotes its generalized inverse.
In that case, we say that $f$ is a $(d,\psi)$-process.
For example if $f$ is sub-Gamma, that is:
\begin{equation}
  \label{eq:sub_gamma}
  \psi(\lambda,\delta)\leq \frac{\nu \lambda^2 \delta^2}{2(1-c\lambda \delta)}\,,
\end{equation}
we obtain,
\begin{equation}
  \label{eq:sub_gamma_tail}
  \ell_u(x,y) \leq \big(c u + \sqrt{2\nu u}\big) d(x,y)\,.
\end{equation}
The generality of Eq.~\ref{eq:sub_gamma} makes it convenient to derive bounds
for a wide variety of processes beyond Gaussian processes,
as we see for example in Section~\ref{sec:gp2}.

\subsection{A Tree of Successive Discretizations}

As stated in the introduction, our strategy to obtain confidence intervals
for stochastic processes is by successive discretization of $\cX$.
We define a notion of tree that will be used for this purpose.
A set $\cT=\big(\cT_h\big)_{h\geq 0}$ where $\cT_h\subset\cX$ for $h\geq 0$ is a tree
with parent relationship $p:\cX\to\cX$, when for all $x\in \cT_{h+1}$ its parent is given by
$p(x)\in \cT_h$.
We denote by $\cT_{\leq h}$ the set of the nodes of $\cT$ at depth lower than $h$:
$\cT_{\leq h} = \bigcup_{h'\leq h} \cT_h'$.
For $h\geq 0$ and a node $x\in \cT_{h'}$ with $h\leq h'$,
we also denote by $p_h(x)$ its parent at depth $h$,
that is $p_h(x) = p^{h'-h}(x)$
and we note $x\succ s$ when $s$ is a parent of $x$.
To simplify the notations in the sequel,
we extend the relation $p_h$ to $p_h(x)=x$ when $x\in\cT_{\leq h}$.

We now introduce a powerful inequality bounding the supremum of the difference of $f$
between a node and any of its descendent in $\cT$,
provided that $\abs{\cT_h}$ is not excessively large.

\begin{theorem}[Generic Chaining Upper Bound]
  \label{thm:chaining}
  Fix any $u>0$, $a>1$ and $\big(n_h\big)_{h\in\bN}$ an increasing sequence of integers.
  Set $u_i=u+n_i+\log\big(i^a\zeta(a)\big)$
  where $\zeta$ is the Riemann zeta function.
  Then for any tree $\cT$ such that $\abs{\cT_h}\leq e^{n_h}$,
  \[\forall h\geq 0, \forall s\in\cT_h,~ \sup_{x\succ s} f(x)-f(s) \leq \omega_h\,,\]
  holds with probability at least $1-e^{-u}$,
  where,
  \[\omega_h = \sup_{x\in\cX} \sum_{i> h} \ell_{u_i}\big(p_i(x), p_{i-1}(x)\big)\,.\]
\end{theorem}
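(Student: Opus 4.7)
The strategy is classical generic chaining: control every ``edge'' of the tree individually with a well-chosen confidence level, then reconstruct the deviation between a node $s \in \cT_h$ and an arbitrary descendent $x \succ s$ by telescoping along the branch joining them. The definition of $\ell_u$ and the Riemann-zeta weighting built into $u_i$ are precisely what is needed to make a double union bound (over depths and over nodes at each depth) sum to at most $e^{-u}$.

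\textbf{Step 1: Edge event and union bound.} For each $i \geq 1$ and each $x \in \cT_i$, the definition of $\ell_{u_i}$ gives
\[\Pr\big[f(x) - f(p(x)) > \ell_{u_i}(x, p(x))\big] \leq e^{-u_i}.\]
Call $\cE$ the event that this inequality is violated for some $i \geq 1$ and some $x \in \cT_i$. Since there are at most $\abs{\cT_i} \leq e^{n_i}$ edges at level $i$,
\[\Pr[\cE] \;\leq\; \sum_{i \geq 1} e^{n_i} e^{-u_i} \;=\; \sum_{i \geq 1} e^{n_i} e^{-u - n_i - \log(i^a \zeta(a))} \;=\; e^{-u} \sum_{i \geq 1} \frac{1}{i^a \zeta(a)} \;=\; e^{-u}.\]
Thus the good event $\cE^c$ has probability at least $1 - e^{-u}$, and on it every single-edge increment is dominated by its corresponding $\ell_{u_i}$.

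\textbf{Step 2: Telescoping on the chain.} Work on $\cE^c$ and fix $h \geq 0$, $s \in \cT_h$ and any descendent $x \succ s$, say $x \in \cT_{h'}$ with $h' > h$. By definition of the chain, $p_h(x) = s$ and $p_{h'}(x) = x$, so writing the difference as a telescoping sum along the unique branch from $s$ to $x$ gives
\[f(x) - f(s) \;=\; \sum_{i = h+1}^{h'} \bigl[f(p_i(x)) - f(p_{i-1}(x))\bigr] \;\leq\; \sum_{i = h+1}^{h'} \ell_{u_i}\bigl(p_i(x), p_{i-1}(x)\bigr).\]
Using the convention $p_i(x) = x$ for $i \geq h'$, every term with $i > h'$ contributes $\ell_{u_i}(x, x) = 0$, so the sum can be harmlessly extended to all $i > h$. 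Taking the supremum over $x \in \cX$ (or at least over the descendents of $s$, which is sufficient) of the extended sum yields exactly $\omega_h$, hence $f(x) - f(s) \leq \omega_h$, and finally the supremum over $x \succ s$ gives the claim.

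\textbf{What I expect to be delicate.} The proof is mechanical once the $u_i$ are chosen; the only real design choice is making the union bound close. The ingredient that does the work is the additive decomposition $u_i = u + n_i + \log(i^a \zeta(a))$: the $n_i$-term absorbs the $e^{n_i}$ cardinality bound at depth $i$, the $\log(i^a \zeta(a))$-term produces a summable $\sum i^{-a}/\zeta(a) = 1$, and the $u$-term is free for the user. A small subtlety worth stating is that the supremum defining $\omega_h$ is over all $x \in \cX$, so one must either assume (or note) that every point of $\cX$ sits under some chain (e.g.\ $\cT$ eventually exhausts the finite $\cX$, with extended parents being identity), which makes the extension of the sum to $i > h'$ well defined and the $\ell_{u_i}(x,x) = 0$ argument valid.
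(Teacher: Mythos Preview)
Your proposal is correct and follows essentially the same route as the paper's proof: a union bound over the at most $e^{n_i}$ edges at each depth $i$, a second union bound over depths using the $\zeta(a)$-weighting in $u_i$, and then telescoping along the branch from $s$ to $x$. Your write-up is in fact a bit more careful than the paper's in making explicit why the sum can be extended to all $i>h$ via the convention $p_i(x)=x$ and $\ell_{u_i}(x,x)=0$.
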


The full proof of the theorem can be found in Appendix~\ref{sec:proof_chaining}.
It relies on repeated application of the union bound over the $e^{n_i}$ pairs $\big(p_i(x),p_{i-1}(x)\big)$.

Now, if we look at $\cT_h$ as a discretization of $\cX$
where a point $x\in\cX$ is approximated by $p_h(x)\in\cT_h$,
this result can be read in terms of discretization error,
as stated in the following corollary.

\begin{corollary}[Discretization error of $\cT_h$]
  \label{cor:chaining}
  Under the assumptions of Theorem~\ref{thm:chaining}
  with $\cX=\cT_{\leq h_0}$ for $h_0$ large enough, we have that,
  \[\forall h, \forall x\in\cX,~ f(x)-f(p_h(x)) \leq \omega_h\,,\]
  holds with probability at least $1-e^{-u}$.  
\end{corollary}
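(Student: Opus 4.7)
The plan is to deduce the corollary directly from Theorem~\ref{thm:chaining} by what is essentially a rewording: the theorem controls, uniformly in $s\in\cT_h$, the supremum of $f(x)-f(s)$ over descendants $x\succ s$; the corollary rewrites the same event as a control, uniformly in $x\in\cX$, of the approximation error $f(x)-f(p_h(x))$ incurred by replacing $x$ by its ancestor at depth $h$.

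Concretely, I would condition on the event of probability at least $1-e^{-u}$ on which the conclusion of Theorem~\ref{thm:chaining} holds simultaneously for every $h\geq 0$ and every $s\in\cT_h$. Fix an arbitrary $x\in\cX=\cT_{\leq h_0}$ and $h\geq 0$, and let $h'\leq h_0$ be the unique depth such that $x\in\cT_{h'}$. If $h<h'$, set $s:=p_h(x)\in\cT_h$; by definition of the parent relation one has $x\succ s$, and applying Theorem~\ref{thm:chaining} to this particular $s$ yields $f(x)-f(s)\leq \omega_h$, which is the claim. If instead $h\geq h'$, the convention stated just before Theorem~\ref{thm:chaining} gives $p_h(x)=x$, so $f(x)-f(p_h(x))=0$, and the inequality reduces to $\omega_h\geq 0$; this holds because each $\ell_{u_i}(\cdot,\cdot)$ entering the definition of $\omega_h$ is a tail bound on the increments and is nonnegative in the $(d,\psi)$-process setting (e.g.\ $\ell_u(x,y)\leq \psi^{*-1}(u,d(x,y))\geq 0$).

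Combining the two cases on the same high-probability event yields the uniform bound $f(x)-f(p_h(x))\leq \omega_h$ for every $x\in\cX$ and every $h\geq 0$. The role of the hypothesis $\cX=\cT_{\leq h_0}$ for $h_0$ large enough is precisely to ensure that every $x\in\cX$ sits as a node of the tree, so that $p_h(x)$ is well defined (and the statement becomes vacuous for $h\geq h_0$); since $\cX$ may be taken finite without loss of generality, such an $h_0$ always exists. No real obstacle arises, as the argument is essentially a relabelling of quantifiers in Theorem~\ref{thm:chaining}; the one step deserving explicit mention is the trivial verification $\omega_h\geq 0$ that handles the boundary case $p_h(x)=x$.
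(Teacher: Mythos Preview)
Your argument is correct and matches the paper's (implicit) approach: the corollary is stated without proof in the paper because it is an immediate relabelling of Theorem~\ref{thm:chaining}, obtained by taking $s=p_h(x)$ and noting $x\succ s$. Your explicit treatment of the boundary case $p_h(x)=x$ is a welcome clarification; note that $\omega_h\geq 0$ follows simply by evaluating the supremum at any $x\in\cT_{\leq h}$, for which every summand is $\ell_{u_i}(x,x)=0$.
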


\subsection{Geometric Interpretation for $(d,\psi)$-processes}
\label{sec:psi_process}

The previous inequality suggests that to obtain a good upper bound on the discretization error,
one should take $\cT$ such that $\ell_{u_i}(p_i(x),p_{i-1}(x))$
is as small as possible for every $i>0$ and $x\in\cX$.
We specify what it implies for $(d,\psi)$-processes.
In that case, we have:
\[\omega_h \leq \sup_{x\in\cX} \sum_{i>h} \psi^{*-1}\Big(u_i,d\big(p_i(x),p_{i-1}(x)\big)\Big)\,.\]
Writing $\Delta_i(x)=\sup_{x'\succ p_i(x)}d(x',p_i(x))$
the $d$-radius of the ``cell'' at depth $i$ containing $x$,
we remark that $d(p_i(x),p_{i-1}(x))\leq \Delta_{i-1}(x)$,
that is:
\[
  \omega_h \leq \sup_{x\in\cX} \sum_{i>h} \psi^{*-1}\big(u_i,\Delta_{i-1}(x)\big)\,.
\]
In order to make this bound as small as possible,
one should spread the points of $\cT_h$ in $\cX$
so that $\Delta_h(x)$ is evenly small,
while satisfying the requirement $\abs{\cT_h}\leq e^{n_h}$.
Let $\Delta = \sup_{x,y\in\cX}d(x,y)$ and $\epsilon_h=\Delta 2^{-h}$,
and define an $\epsilon$-net as a set $T\subseteq \cX$ for which $\cX$ is covered by $d$-balls
of radius $\epsilon$ with center in $T$.
Then if one takes $n_h=2\log N(\cX,d,\epsilon_h)$, twice the metric entropy of $\cX$,
that is the logarithm of the minimal $\epsilon_h$-net,
we obtain with probability at least $1-e^{-u}$ that
$\forall h\geq 0, \forall s\in\cT_h$\,:
\begin{equation}
  \label{eq:classical_chaining}
  \sup_{x\succ s}f(x)-f(s) \leq \sum_{i>h} \psi^{*-1}(u_i, \epsilon_i)\,,
\end{equation}
where $u_i= u+2\log N(\cX,d,\epsilon_i)+\log(i^a\zeta(a))$.
The tree $\cT$ achieving this bound consists in computing a minimal $\epsilon$-net at each depth,
which can be done efficiently by Algorithm~\ref{alg:greedy_cover}
if one is satisfied by an almost optimal heuristic
which exhibits an approximation ratio of $\max_{x\in\cX} \sqrt{\log \log \abs{\cB(x,\epsilon)}}$,
as discussed in Appendix~\ref{sec:greedy_cover}.
This technique is often called \emph{classical chaining} \citep{Dudley1967}
and we note that an implementation appears in \cite{Contal2015} on real data.
However the upper bound in Eq.~\ref{eq:classical_chaining}
is not tight as for instance with a Gaussian process indexed by an ellipsoid,
as discussed in Section~\ref{sec:gp}.
We will present later in Section~\ref{sec:lower_bound} an algorithm to compute a tree $\cT$
in quadratic time leading to both a lower and upper bound on $\sup_{x\succ s}f(x)-f(s)$
when $f$ is a Gaussian process.

The previous inequality is particularly convenient when we know a bound
on the growth of the metric entropy of $(\cX,d)$, as stated in the following corollary.

\begin{corollary}[Sub-Gamma process with metric entropy bound]
  \label{cor:subgamma_bigoh}
  If $f$ is sub-Gamma and there exists $R,D\in\bR$ such that for all $\epsilon>0$,
  $N(\cX,d,\epsilon) \leq (\frac R \epsilon)^D$, then with probability at least $1-e^{-u}$\,:
  \[\forall h\geq 0,\forall s\in\cT_h,~ \sup_{x\succ s}f(x)-f(s) =\cO\Big(\big(c(u + D h)+\sqrt{\nu(u+Dh)}\big) 2^{-h}\Big)\,.\]
\end{corollary}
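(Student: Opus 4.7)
The plan is to substitute the metric-entropy hypothesis into the chaining bound of Equation~\ref{eq:classical_chaining} specialized to the sub-Gamma case, and then reduce everything to elementary tail estimates of the geometric series.

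First, I would read off the ingredients. Since $f$ is sub-Gamma, Equation~\ref{eq:sub_gamma_tail} gives $\psi^{*-1}(u_i,\epsilon_i) \leq (c u_i + \sqrt{2\nu u_i})\,\epsilon_i$ with $\epsilon_i = \Delta\,2^{-i}$. The entropy hypothesis yields
\[
n_i \;=\; 2\log N(\cX,d,\epsilon_i) \;\leq\; 2D\log\!\bigl(R/\epsilon_i\bigr) \;=\; 2D\log(R/\Delta) + (2D\log 2)\, i,
\]
so $n_i = O(Di)$ uniformly in $i\geq 1$. Consequently
\[
u_i \;=\; u + n_i + \log\!\bigl(i^a \zeta(a)\bigr) \;=\; O\bigl(u + Di\bigr),
\]
where the $\log(i^a\zeta(a))$ term is absorbed into $O(Di)$ for $i\geq 1$ (and contributes only a constant at $i=1$).

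Next, I plug this into the chaining inequality (Eq.~\ref{eq:classical_chaining}), which holds with probability at least $1-e^{-u}$ for $\cT$ built from minimal $\epsilon_h$-nets:
\[
\sup_{x\succ s} f(x)-f(s) \;\leq\; \sum_{i>h} \psi^{*-1}(u_i,\epsilon_i)
\;\leq\; C\sum_{i>h}\Bigl(c(u+Di) + \sqrt{\nu(u+Di)}\Bigr)\,2^{-i},
\]
for some absolute constant $C$. The remaining work is to evaluate this tail.

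Finally, I would invoke the standard geometric-series tail estimates: for every $\alpha,\beta\geq 0$,
\[
\sum_{i>h}(\alpha+\beta i)\,2^{-i} \;=\; O\!\bigl((\alpha+\beta h)\,2^{-h}\bigr),
\qquad
\sum_{i>h}\sqrt{\alpha+\beta i}\,\cdot 2^{-i} \;=\; O\!\bigl(\sqrt{\alpha+\beta h}\,\cdot 2^{-h}\bigr),
\]
both of which follow by comparing with $\sum_{i>h} 2^{-i/2}\cdot 2^{-i/2}$ and using that the leading term of the partial sum is dictated by $i=h+1$. Applying these with $(\alpha,\beta)=(u,D)$ to the two pieces separately gives
\[
\sup_{x\succ s} f(x)-f(s) \;=\; O\!\Bigl(\bigl(c(u+Dh)+\sqrt{\nu(u+Dh)}\bigr)\,2^{-h}\Bigr),
\]
which is the stated bound. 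There is no real obstacle here; the only point that needs mild care is the absorption of the Riemann-zeta correction $\log(i^a\zeta(a))$ into the $Di$ term and the verification that the constant $R/\Delta$ from the entropy bound only inflates the implicit constant in the $O(\cdot)$.
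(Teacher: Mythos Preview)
Your proposal is correct and follows essentially the same approach as the paper's own proof: bound $u_i = \cO(u+Di)$ from the metric-entropy hypothesis, plug into Eq.~\ref{eq:classical_chaining} using the sub-Gamma tail bound, and finish with the geometric-series tail estimates $\sum_{i\geq h} i\,2^{-i}=\cO(h\,2^{-h})$ and $\sum_{i\geq h}\sqrt{i}\,2^{-i}=\cO(\sqrt{h}\,2^{-h})$. Your write-up is in fact more explicit than the paper's about absorbing the $\log(R/\Delta)$ and $\log(i^a\zeta(a))$ terms into the big-$\cO$ constant, but the argument is the same.
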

\begin{proof}
  With the condition on the growth of the metric entropy,
  we obtain $u_i = \cO\big(u+D\log R + D i\big)$.
  With Eq.~\ref{eq:classical_chaining} for a sub-Gamma process we get,
  knowing that $\sum_{i=h}^\infty i 2^{-i} =\cO\big(h 2^{-h}\big)$
  and $\sum_{i=h}^\infty \sqrt{i}2^{-i}=\cO\big(\sqrt{h}2^{-h}\big)$,
  that $\omega_h = \cO\Big(\big(c (u+D h) + \sqrt{\nu(u+D h)}\big)2^{-h}\Big)$.
\end{proof}

Note that the conditions of Corollary~\ref{cor:subgamma_bigoh} are fulfilled
when $\cX\subset [0,R]^D$ and there is $c\in\bR$ such that for all $x,y\in\cX,~d(x,y) \leq c\norm{x-y}_2$,
by simply cutting $\cX$ in hyper-cubes of side length $\epsilon$.
We also remark that this condition is very close to the near-optimality dimension of the metric space $(\cX,d)$
defined in \cite{Bubeck2011}.
However our condition constraints the entire search space $\cX$
instead of the near-optimal set $\cX_\epsilon = \big\{ x\in\cX: f(x)\geq \sup_{x^\star\in\cX}f(x^\star)-\epsilon\big\}$.
Controlling the dimension of $\cX_\epsilon$ may allow to obtain an exponential decay of the regret
in particular deterministic function $f$ with a quadratic behavior near its maximum.
However, up to our knowledge no progress has been made in this direction for stochastic processes
without constraining its behavior around the maximum.
A reader interested in this subject may look at
the recent work by \cite{Grill2015} on smooth and noisy functions with unknown smoothness,
and the works by \cite{Freitas2012} or \cite{Wang2014b}
on Gaussian processes without noise and a quadratic local behavior.

\section{Regret Bounds for Bandit Algorithms}
\label{sec:regret}

Now we have a tool to discretize $\cX$ at a certain accuracy,
we show here how to derive an optimization strategy on $\cX$.

\subsection{High Confidence Intervals}
Assume that given $i-1$ observations $Y_{i-1}=(y_1,\dots,y_{i-1})$ at queried locations $X_{i-1}$,
we can compute $L_i(x,u)$ and $U_i(x,u)$ for all $u>0$ and $x\in\cX$, such that:
\[ \Pr\Big[ f(x) \in \big(L_i(x,u), U_i(x,u)\big) \Big] \geq 1-e^{-u}\,.\]
Then for any $h(i)>0$ that we will carefully choose later,
we obtain by a union bound on $\cT_{h(i)}$ that:
\[ \Pr\Big[ \forall x\in\cT_{h(i)},~ f(x) \in \big(L_i(x,u+n_{h(i)}), U_i(x,u+n_{h(i)})\big) \Big] \geq 1-e^{-u}\,.\]
And by an additional union bound on $\bN$ that:
\begin{equation}
  \label{eq:ucb}
  \Pr\Big[ \forall i\geq 1, \forall x\in\cT_{h(i)},~ f(x) \in \big(L_i(x,u_i), U_i(x,u_i)\big) \Big] \geq 1-e^{-u}\,,
\end{equation}
where $u_i=u+n_{h(i)}+\log\big(i^a\zeta(a)\big)$ for any $a>1$ and $\zeta$ is the Riemann zeta function.
Our \emph{optimistic} decision rule for the next query is thus:
\begin{equation}
  \label{eq:argmax}
  x_i \in \argmax_{x\in\cT_{h(i)}} U_i(x,u_i)\,.
\end{equation}
Combining this with Corollary~\ref{cor:chaining}, we are able to prove the following bound
linking the regret with $\omega_{h(i)}$ and the width of the confidence interval.

\begin{theorem}[Generic Regret Bound]
  \label{thm:regret_bound}
  When for all $i\geq 1$, $x_i \in \argmax_{x\in \cT_{h(i)}} U_i(x,u_i)$
  we have with probability at least $1- 2 e^{-u}$:
  \[ R_t = t \sup_{x\in\cX} f(x)-\sum_{i=1}^t f(x_i) \leq \sum_{i=1}^t\Big\{ \omega_{h(i)} + U_i(x_i,u_i)-L_i(x_i,u_i)\Big\}\,.\]
\end{theorem}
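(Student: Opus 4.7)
The plan is to combine the discretization-error control from Corollary~\ref{cor:chaining} with the pointwise confidence intervals from Eq.~\ref{eq:ucb} via a union bound, then chain four elementary inequalities to bound the per-step regret. Concretely, I would introduce two events: let $E_1$ be the event from Corollary~\ref{cor:chaining} that for every $h\geq 0$ and every $x\in\cX$,
\[f(x) - f(p_h(x)) \leq \omega_h\,,\]
which holds with probability at least $1-e^{-u}$; and let $E_2$ be the event from Eq.~\ref{eq:ucb} that for every $i\geq 1$ and every $x\in\cT_{h(i)}$,
\[L_i(x,u_i) < f(x) < U_i(x,u_i)\,,\]
which also holds with probability at least $1-e^{-u}$. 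A union bound gives $\Pr[E_1\cap E_2]\geq 1-2e^{-u}$, and all subsequent reasoning is carried out on this intersection.

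Fix $x^\star\in\argmax_{x\in\cX}f(x)$ (well-defined since $\cX$ is finite) and any round $1\leq i\leq t$. Because $p_{h(i)}(x^\star)\in\cT_{h(i)}$ (using the convention $p_h(x)=x$ for $x\in\cT_{\leq h}$, together with the hypothesis $\cX=\cT_{\leq h_0}$ under which Corollary~\ref{cor:chaining} is stated), I can chain four inequalities:
\[f(x^\star) \stackrel{(E_1)}{\leq} f\!\big(p_{h(i)}(x^\star)\big)+\omega_{h(i)} \stackrel{(E_2)}{\leq} U_i\!\big(p_{h(i)}(x^\star),u_i\big)+\omega_{h(i)} \stackrel{\eqref{eq:argmax}}{\leq} U_i(x_i,u_i)+\omega_{h(i)}\,,\]
and separately $f(x_i)\stackrel{(E_2)}{\geq} L_i(x_i,u_i)$. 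Subtracting, the instantaneous regret at round $i$ satisfies
\[f(x^\star)-f(x_i) \leq \omega_{h(i)} + U_i(x_i,u_i) - L_i(x_i,u_i)\,.\]

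Summing from $i=1$ to $t$ yields exactly the claimed bound on $R_t=\sum_{i=1}^t\big(f(x^\star)-f(x_i)\big)$. I do not anticipate any genuine obstacle: all the heavy lifting is already absorbed into Theorem~\ref{thm:chaining} (and thus Corollary~\ref{cor:chaining}), which replaces the naive $\sqrt{\log|\cX|}$ slack at each step by the chaining quantity $\omega_{h(i)}$. The only care needed is to make sure the depth $h(i)$ in the UCB union bound (Eq.~\ref{eq:ucb}) and in the chaining bound are instantiated with the same sequence $u_i=u+n_{h(i)}+\log(i^a\zeta(a))$, so that the two high-probability events can be combined cleanly by a single union bound costing a factor of $2$ in the failure probability.
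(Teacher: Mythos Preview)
Your proposal is correct and follows essentially the same route as the paper's proof: both combine the chaining bound (Theorem~\ref{thm:chaining}/Corollary~\ref{cor:chaining}) with the confidence intervals of Eq.~\ref{eq:ucb} via a union bound costing a factor~$2$, then use the optimistic choice of~$x_i$ to chain $f(x^\star)\leq \omega_{h(i)}+U_i(x_i,u_i)$ and $f(x_i)\geq L_i(x_i,u_i)$ before summing. Your version is slightly more explicit in naming the two events and fixing~$x^\star$, but the argument is identical in substance.
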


\begin{proof}
  Using Theorem~\ref{thm:chaining} we have that,
  \[\forall h\geq 0,\,\sup_{x\in\cX}f(x) \leq \omega_h+\sup_{x\in\cX}f(p_h(x))\,,\]
  holds with probability at least $1-e^{-u}$.
  Since $p_{h(i)}(x) \in \cT_{h(i)}$ for all $x\in\cX$,
  we can invoke Eq.~\ref{eq:ucb}\,:
  \[\forall i\geq 1\,~ \sup_{x\in\cX} f(x)-f(x_i) \leq \omega_{h(i)}+\sup_{x\in\cT_{h(i)}}U_i(x,u_i)-L_i(x_i,u_i)\,,\]
  holds with probability at least $1-2e^{-u}$.
  Now by our choice for $x_i$, $\sup_{x\in\cT_{h(i)}}U_i(x,u_i) = U_i(x_i,u_i)$,
  proving Theorem~\ref{thm:regret_bound}.
\end{proof}

In order to select the level of discretization $h(i)$ to reduce the bound on the regret,
it is required to have explicit bounds on $\omega_i$ and the confidence intervals.
For example by choosing 
\[h(i)=\min\Big\{i:\bN: \omega_i \leq \sqrt{\frac{\log i}{i}} \Big\}\,,\]
we obtain $\sum_{i=1}^t \omega_{h(i)} \leq 2\sqrt{t\log t}$ as shown later.
The performance of our algorithm is thus linked with the decrease rate of $\omega_i$,
which characterizes the ``size'' of the optimization problem.
We first study the case where $f$ is distributed as a Gaussian process,
and then for a sum of squared Gaussian processes.

\subsection{Results for Gaussian Processes}
\label{sec:gp}
The problem of regret minimization where $f$ is sampled from a Gaussian process
has been introduced by \cite{Srinivas2010} and \cite{grunewalder2010}.
Since then, it has been extensively adapted to various settings of Bayesian optimization
with successful practical applications.
In the first work the authors address the cumulative regret
and assume that either $\cX$ is finite or that the samples of the process are Lipschitz
with high probability, where the distribution of the Lipschitz constant has Gaussian tails.
In the second work the authors address the simple regret without noise and with known horizon,
they assume that the canonical pseudo-metric $d$ is bounded by a given power of the supremum norm.
In both works they require that the input space is a subset of $\bR^D$.
The analysis in our paper permits to derive similar bounds in a nonparametric fashion
where $(\cX,d)$ is an arbitrary metric space.
Note that if $(\cX,d)$ is not totally bounded, then the supremum of the process is infinite with probability one,
so is the regret of any algorithm.

\paragraph{Confidence intervals and information gain.}
First, $f$ being distributed as a Gaussian process,
it is easy to derive confidence intervals given a set of observations.
Writing $\mat{Y}_i$ the vector of noisy values at points in $X_i$,
we find by Bayesian inference \citep{Rasmussen2006} that:
\[\Pr\Big[ \abs{f(x)-\mu_i(x)} \geq \sigma_i(x)\sqrt{2u}\Big] < e^{-u}\,,\]
for all $x\in\cX$ and $u>0$, where:
\begin{align}
  \label{eq:mu}
  \mu_i(x) &= \mat{k}_i(x)^\top \mat{C}_i^{-1}\mat{Y}_i\\
  \label{eq:sigma}
  \sigma_i^2(x) &= k(x,x) - \mat{k}_i(x)^\top \mat{C}_i^{-1} \mat{k}_i(x)\,,
\end{align}
where $\mat{k}_i(x) = [k(x_j, x)]_{x_j \in X_i}$ is the covariance vector between $x$ and $X_i$,
$\mat{C}_i = \mat{K}_i + \eta^2 \mat{I}$,
and $\mat{K}_i=[k(x,x')]_{x,x' \in X_i}$ the covariance matrix
and $\eta^2$ the variance of the Gaussian noise.
Therefore the width of the confidence interval in Theorem~\ref{thm:regret_bound}
can be bounded in terms of $\sigma_{i-1}$:
\[U_i(x_i,u_i)-L_i(x_i,u_i) \leq 2\sigma_{i-1}(x_i)\sqrt{2u_i}\,.\]
Furthermore it is proved in \cite{Srinivas2012} that the sum of the posterior variances
at the queried points $\sigma_{i-1}^2(x_i)$ is bounded in terms of information gain:
\[\sum_{i=1}^t \sigma_{i-1}^2(x_i) \leq c_\eta \gamma_t\,,\]
where $c_\eta=\frac{2}{\log(1+\eta^{-2})}$
and $\gamma_t = \max_{X_t\subseteq\cX:\abs{X_t}=t} I(X_t)$
is the maximum information gain of $f$ obtainable by a set of $t$ points.
Note that for Gaussian processes,
the information gain is simply $I(X_t)=\frac 1 2 \log\det(\mat{I}+\eta^{-2}\mat{K}_t)$.
Finally, using the Cauchy-Schwarz inequality and the fact that $u_t$ is increasing we have
with probability at least $1- 2 e^{-u}$:
\begin{equation}
  \label{eq:gp_regret}
  R_t \leq 2\sqrt{2 c_\eta t u_t \gamma_t} + \sum_{i=1}^t \omega_{h(i)}\,.
\end{equation}
The quantity $\gamma_t$ heavily depends on the covariance of the process.
On one extreme, if $k(\cdot,\cdot)$ is a Kronecker delta,
$f$ is a Gaussian white noise process and $\gamma_t=\cO(t)$.
On the other hand \cite{Srinivas2012} proved the following inequalities for widely used covariance functions
and $\cX\subset \bR^D$:
\begin{itemize}
\item linear covariance $k(x,y)=x^\top y$, $\gamma_t=\cO\big(D \log t\big)$.
\item squared exponential covariance $k(x,y)=e^{-\frac 1 2 \norm{x-y}_2^2}$, $\gamma_t=\cO\big((\log t)^{D+1}\big)$.
\item Mat\'ern covariance, $k(x,y)=\frac{2^{p-1}}{\Gamma(p)}\big(\sqrt{2p}\norm{x-y}_2\big)^p K_p\big(\sqrt{2p}\norm{x-y}_2\big)$,
  where $p>0$ and $K_p$ is the modified Bessel function,
  $\gamma_t=\cO\big( (\log t) t^a\big)$, with $a=\frac{D(D+1)}{2p+D(D+1)}<1$ for $p>1$.
\end{itemize}

\paragraph{Bounding $\omega_h$ with the metric entropy.}
We now provide a policy to choose $h(i)$ minimizing the right hand side of Eq.\ref{eq:gp_regret}.
When an explicit upper bound on the metric entropy of the form
$\log N(\cX,d,\epsilon)\leq \cO(-D \log \epsilon)$ holds,
we can use Corollary~\ref{cor:subgamma_bigoh} which gives:
\[\omega_h\leq\cO\big(\sqrt{u+D h}2^{-h}\big)\,.\]
This upper bound holds true in particular for Gaussian processes with $\cX\subset[0,R]^D$
and for all $x,y\in\cX$, $d(x,y) \leq \cO\big(\norm{x-y}_2\big)$.
For stationary covariance this becomes $k(x,x)-k(x,y)\leq \cO\big(\norm{x-y}_2\big)$
which is satisfied for the usual covariances used in Bayesian optimization such as
the squared exponential covariance
or the Mat\'ern covariance with parameter $p\in\big(\frac 1 2, \frac 3 2, \frac 5 2\big)$.
For these values of $p$ it is well known that $k(x,y)=h_p\big(\sqrt{2p}\norm{x-y}_2\big) \exp\big(-\sqrt{2p}\norm{x-y}_2\big)$,
with $h_{\frac 1 2}(\delta)=1$, $h_{\frac 3 2}(\delta)=1+\delta$ and $h_{\frac 5 2}(\delta)=1+\delta+\frac 1 3 \delta^2$.
Then we see that is suffices to choose $h(i)=\ceil{\frac 1 2 \log_2 i}$
to obtain $\omega_{h(i)} \leq \cO\Big( \sqrt{\frac{u+\frac 1 2 D\log i}{i}} \Big)$
and since $\sum_{i=1}^t i^{-\frac 1 2}\leq 2 \sqrt{t}$ and
$\sum_{i=1}^t \big(\frac{\log i}{i}\big)^{\frac 1 2} \leq 2\sqrt{t\log t}$,
\[R_t \leq  \cO\Big(\sqrt{t \gamma_t \log t }\Big)\,, \]
holds with high probability.
Such a bound holds true in particular for the Ornstein-Uhlenbeck process,
which was conjectured impossible in \cite{Srinivas2010} and \cite{Srinivas2012}.
However we do not know suitable bounds for $\gamma_t$ in this case
and can not deduce convergence rates.

\paragraph{Gaussian processes indexed on ellipsoids and RKHS.}
As mentioned in Section~\ref{sec:psi_process}, the previous bound on the discretization error
is not tight for every Gaussian process.
An important example is when the search space is a (possibly infinite dimensional) ellipsoid:
\[\cX=\Big\{ x\in \ell^2: \sum_{i\geq 1}\frac{x_i^2}{a_i^2} \leq 1\Big\}\,.\]
where $a\in\ell^2$,
and $f(x) = \sum_{i\geq 1}x_ig_i$ with $g_i\iid \cN(0,1)$,
and the pseudo-metric $d(x,y)$ coincide with the usual $\ell_2$ metric.
The study of the supremum of such processes is connected to learning error bounds
for kernel machines like Support Vector Machines,
as a quantity bounding the learning capacity of a class of functions in a RKHS,
see for example \cite{Mendelson2002}.
It can be shown by geometrical arguments that
$\E \sup_{x: d(x,s)\leq \epsilon} f(x)-f(s) \leq \cO\big(\sqrt{\sum_{i\geq 1}\min(a_i^2,\epsilon^2)}\big)\,,$
and that this supremum exhibits $\chi^2$-tails around its expectation,
see for example \cite{Boucheron2013} and \cite{Talagrand2014}.
This concentration is not grasped by Corollary~\ref{cor:subgamma_bigoh},
it is required to leverage the construction of Section~\ref{sec:lower_bound}
to get a tight estimate.
Therefore the present work forms a step toward efficient and practical online model selection
in such classes in the spirit of \cite{Rakhlin2014} and \cite{Gaillard2015}.

\subsection{Results for Quadratic Forms of Gaussian Processes}
\label{sec:gp2}

The preeminent model in Bayesian optimization is by far the Gaussian process.
Yet, it is a very common task to attempt minimizing a regret on functions which
does not look like Gaussian processes.
Consider the typical cases where $f$ has the form of a mean square error
or a Gaussian likelihood.
In both cases, minimizing $f$ is equivalent to minimize a sum of squares,
which we can not assume to be sampled from a Gaussian process.
To alleviate this problem, we show that this objective fits in our generic setting.
Indeed, if we consider that $f$ is a sum of squares of Gaussian processes,
then $f$ is sub-Gamma with respect to a natural pseudo-metric.
In order to match the challenge of maximization, we will precisely take the opposite.
In this particular setting we allow the algorithm
to observe directly the noisy values of the \emph{separated} Gaussian processes,
instead of the sum of their square.
To simplify the forthcoming arguments, we will choose independent and identically distributed processes,
but one can remove the covariances between the processes by Cholesky decomposition of the covariance matrix,
and then our analysis adapts easily to processes with non identical distributions.

\paragraph{The stochastic smoothness of squared GP.}
Let $f=-\sum_{j=1}^N g_j^2(x)$,
where $\big(g_j\big)_{1\le j\le N}$ are independent centered Gaussian processes $g_j\iid\cGP(0,k)$
with stationary covariance $k$ such that $k(x,x)=\kappa$ for every $x\in\cX$.
We have for $x,y\in\cX$ and $\lambda<(2\kappa)^{-1}$:
\[\log\E e^{\lambda(f(x)-f(y))} = -\frac{N}{2}\log\Big(1-4\lambda^2(\kappa^2-k^2(x,y))\Big)\,. \]
Therefore with $d(x,y)=2\sqrt{\kappa^2-k^2(x,y)}$ and $\psi(\lambda,\delta)=-\frac{N}{2}\log\big(1-\lambda^2\delta^2\big)$,
we conclude that $f$ is a $(d,\psi)$-process.
Since $-\log(1-x^2) \leq \frac{x^2}{1-x}$ for $0\leq x <1$,
which can be proved by series comparison,
we obtain that $f$ is sub-Gamma with parameters $\nu=N$ and $c=1$.
Now with Eq.~\ref{eq:sub_gamma_tail},
\[\ell_u(x,y)\leq (u+\sqrt{2 u N})d(x,y)\,.\]
Furthermore, we also have that $d(x,y)\leq \cO(\norm{x-y}_2)$ for $\cX\subseteq \bR^D$
and standard covariance functions including
the squared exponential covariance or the Mat\'ern covariance with parameter $p=\frac 3 2$ or $p=\frac 5 2$.
Then Corollary~\ref{cor:subgamma_bigoh} leads to:
\begin{equation}
  \label{eq:omega_gp2}
  \forall i\geq 0,~ \omega_i \leq \cO\Big( u+D i + \sqrt{N(u+D i)}2^{-i}\Big)\,. 
\end{equation}

\paragraph{Confidence intervals for squared GP.}
As mentioned above, we consider here that we are given separated noisy observations $\mat{Y}_i^j$
for each of the $N$ processes.
Deriving confidence intervals for $f$ given $\big(\mat{Y}_i^j\big)_{j\leq N}$
is a tedious task since the posterior processes $g_j$ given $\mat{Y}_i^j$
are not standard nor centered.
We propose here a solution based directly on a careful analysis of Gaussian integrals.
The proof of the following technical lemma can be found in Appendix~\ref{sec:gp2_tail}.

\begin{lemma}[Tails of squared Gaussian]
\label{lem:gp2_tail}
  Let $X\sim\cN(\mu,\sigma^2)$ and $s>0$. We have:
  \[\Pr\Big[ X^2 \not\in \big(l^2, u^2\big)\Big] < e^{-s^2}\,,\]
  for $u=\abs{\mu}+\sqrt{2} \sigma s$
  and $l=\max\big(0,\abs{\mu}-\sqrt{2}\sigma s\big)$.
\end{lemma}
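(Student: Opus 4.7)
The plan is to reduce the two-sided event $\{X^2\notin(l^2,u^2)\}$ to a single symmetric Gaussian deviation $\{\abs{X-\mu}\geq\sqrt{2}\sigma s\}$, and then close with a sharp one-sided Gaussian tail bound. First, since $X^2=(-X)^2$, $\abs{\mu}=\abs{-\mu}$, and $-X\sim\cN(-\mu,\sigma^2)$, I may assume without loss of generality that $\mu\geq 0$, so that $u=\mu+\sqrt{2}\sigma s$ and $l=\max(0,\mu-\sqrt{2}\sigma s)$.

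The key inclusion, valid up to a null set, is
\[\{X^2\geq u^2\}\cup\{X^2\leq l^2\}\ \subseteq\ \{\abs{X-\mu}\geq\sqrt{2}\sigma s\}.\]
For the upper branch, $X^2\geq u^2$ is equivalent to $\abs{X}\geq u$, and the reverse triangle inequality (combined with $\mu\geq 0$) yields $\abs{X-\mu}\geq\bigl|\,\abs{X}-\mu\,\bigr|=\abs{X}-\mu\geq u-\mu=\sqrt{2}\sigma s$. For the lower branch, when $l=\mu-\sqrt{2}\sigma s>0$, $X^2\leq l^2$ forces $\abs{X}\leq l$, and symmetrically $\abs{X-\mu}\geq\mu-\abs{X}\geq\mu-l=\sqrt{2}\sigma s$. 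The degenerate case $l=0$ arises only when $\mu<\sqrt{2}\sigma s$, and then $\{X^2\leq l^2\}=\{X=0\}$ has probability zero since $X$ admits a density, so it can be discarded.

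It then remains to verify that, for the standardized variable $Z=(X-\mu)/\sigma\sim\cN(0,1)$ and any $t>0$, $\Pr[\abs{Z}\geq t]<e^{-t^2/2}$, after which the substitution $t=\sqrt{2}s$ finishes the argument. I plan to derive this from the one-sided estimate $\Pr[Z\geq t]<\tfrac12 e^{-t^2/2}$, which follows by writing $\int_t^\infty e^{-x^2/2}\,dx=e^{-t^2/2}\int_0^\infty e^{-ty-y^2/2}\,dy$ after the substitution $x=t+y$ and using the strict inequality $e^{-ty-y^2/2}<e^{-y^2/2}$ for $t,y>0$; the factor $\tfrac12$ then appears because $\int_0^\infty e^{-y^2/2}\,dy=\sqrt{\pi/2}$, exactly half of the full Gaussian normalization. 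Doubling by symmetry yields the desired two-sided bound.

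No serious obstacle arises here: the only mildly delicate points are the bookkeeping around the degenerate case $l=0$ and the need to maintain the strict inequality $<$ (rather than $\leq$) in the Gaussian tail, both of which are handled cleanly by the arguments above.
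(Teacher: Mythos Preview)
Your proof is correct. The set-theoretic inclusion $\{X^2\notin(l^2,u^2)\}\subseteq\{|X-\mu|\geq\sqrt{2}\sigma s\}$ is verified cleanly in both regimes, the degenerate case $l=0$ is handled properly, and your tail estimate $\Pr[|Z|\geq t]<e^{-t^2/2}$ via the substitution $x=t+y$ is valid and strict for $t>0$.

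The paper takes a different route: rather than the inclusion argument, it writes $\Pr[X^2\notin(l^2,u^2)]$ explicitly as a combination of four $\erf/\erfc$ terms, then bounds that expression by $\erfc(s)$ through the monotonicity observation $\erfc(\sqrt{2}\mu\sigma^{-1}+s)+\erf(\sqrt{2}\mu\sigma^{-1}-s)\leq 1$, and closes with $\erfc(s)<e^{-s^2}$. Since $\erfc(s)=\Pr[|Z|\geq\sqrt{2}s]$, both arguments land on exactly the same final bound. Your approach is shorter and more conceptual; the paper's explicit $\erf$ manipulation buys something extra, namely a strictly positive lower threshold $l$ even when $\mu<\sqrt{2}\sigma s$ (their ``slightly stronger result''), which your inclusion argument cannot see because it throws away the gap between $\{X^2\notin(l^2,u^2)\}$ and $\{|X-\mu|\geq\sqrt{2}\sigma s\}$ at the outset.
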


Using this lemma, we compute the confidence interval for $f(x)$
by a union bound over $N$.
Denoting $\mu_i^j$ and $\sigma_i^j$ the posterior expectation and deviation
of $g_j$ given $\mat{Y}_i^j$ (computed as in Eq.~\ref{eq:mu} and \ref{eq:sigma}),
the confidence interval follows for all $x\in\cX$:
\begin{equation}
  \label{eq:gp2_ci}
  \Pr\Big[ \forall j\leq m,~ g_j^2(x) \in \big( L_i^j(x,u), U_i^j(x,u) \big)\Big] \geq 1- e^{-u}\,,
\end{equation}
where
\begin{align*}
  U_i^j(x,u) &= \Big(\abs{\mu_i^j(x)}+\sqrt{2(u+\log N)} \sigma_{i-1}^j(x)\Big)^2\\
  \text{ and } L_i^j(x,u) &= \max\Big(0, \abs{\mu_i^j(x)}-\sqrt{2(u+\log N)} \sigma_{i-1}^j(x)\Big)^2\,.
\end{align*}
We are now ready to use Theorem~\ref{thm:regret_bound} to control $R_t$
by a union bound for all $i\in\bN$ and $x\in\cT_{h(i)}$.
Note that under the event of Theorem~\ref{thm:regret_bound},
we have the following:
\[\forall j\leq m, \forall i\in\bN, \forall x\in\cT_{h(i)},~ g_j^2(x) \in \big(L_i^j(x,u_i), U_i^j(x,u_i)\big)\,,\]
Then we also have:
\[\forall j\leq m, \forall i\in\bN, \forall x\in\cT_{h(i)},~ \abs{\mu_i^j(x)} \leq \abs{g_j(x)}+\sqrt{2(u_i+\log N)}\sigma_{i-1}^j(x)\,,\]
Since $\mu_0^j(x)=0$, $\sigma_0^j(x)=\kappa$ and $u_0\leq u_i$
we obtain $\abs{\mu_i^j(x)} \leq \sqrt{2(u_i+\log N)}\big(\sigma_{i-1}^j(x)+\kappa\big)$.
Therefore Theorem~\ref{thm:regret_bound} says with probability at least $1-2e^{-u}$:
\[R_t \leq \sum_{i=1}^t\Big\{\omega_{h(i)} + 8\sum_{j\leq N}(u_i+\log N)\big(\sigma_{i-1}^j(x)+\kappa\big)\sigma_{i-1}^j(x_i) \Big\}\,.\]
It is now possible to proceed as in Section~\ref{sec:gp} and bound the sum of posterior variances with $\gamma_t$\,:
\[R_t \leq \cO\Big( N u_t \big(\sqrt{t \gamma_t} + \gamma_t\big) + \sum_{i=1}^t \omega_{h(t)} \Big)\,.\]
As before, under the conditions of Eq.~\ref{eq:omega_gp2} and
choosing the discretization level $h(i)=\ceil{\frac 1 2 \log_2 i}$
we obtain $\omega_{h(i)}=\cO\Big(i^{-\frac 1 2} \big(u+\frac 1 2 D\log i\big)\sqrt{N}\Big)$,
and since $\sum_{i=1}^t i^{-\frac 1 2} \log i\leq 2 \sqrt{t}\log t$,
\[R_t \leq \cO\Big(N \big(\sqrt{t\gamma_t \log t}+\gamma_t\big) + \sqrt{Nt}\log t\Big)\,,\]
holds with high probability.

\section{Tightness Results for Gaussian Processes}
\label{sec:lower_bound}

We present in this section a strong result on the tree $\cT$ obtained by Algorithm~\ref{alg:tree_lb}.
Let $f$ be a centered Gaussian process $\cGP(0,k)$ with arbitrary covariance $k$.
We show that a converse of Theorem~\ref{thm:chaining} is true with high probability.

\subsection{A High Probabilistic Lower Bound on the Supremum}
We first recall that for Gaussian process we have $\psi^{*-1}(u_i,\delta)=\cO\big(\delta \sqrt{u+n_i}\big)$,
that is:
\[\forall h\geq 0, \forall s\in\cT_h,~\sup_{x\succ s}f(x)-f(s) \leq \cO\Big(\sup_{x\succ s}\sum_{i>h}\Delta_i(x) \sqrt{u+n_i}\Big)\,,\]
with probability at least $1-e^{-u}$.
For the following, we will fix for $n_i$ a geometric sequence $n_i=2^i$ for all $i\geq 1$.
Therefore we have the following upper bound:

\begin{corollary}
  Fix any $u>0$ and let $\cT$ be constructed as in Algorithm~\ref{alg:tree_lb}.
  Then there exists a constant $c_u>0$ such that, for $f\sim\cGP(0,k)$,
  \[\sup_{x\succ s} f(x)-f(s) \leq c_u \sup_{x\succ s} \sum_{i>h} \Delta_i(x)2^{\frac i 2}\,,\]
  holds for all $h\geq 0$ and $s\in\cT_h$ with probability at least $1-e^{-u}$.
\end{corollary}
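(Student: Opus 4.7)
The plan is to read the corollary directly off Theorem~\ref{thm:chaining} applied with the geometric sequence $n_i = 2^i$ that was fixed just above, specialised to the Gaussian tail bound $\ell_u(x,y) \leq \sqrt{2u}\,d(x,y)$. I take for granted that Algorithm~\ref{alg:tree_lb} produces an admissible tree, $|\cT_i| \leq e^{n_i} = e^{2^i}$, so that Theorem~\ref{thm:chaining} is applicable.

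First I would invoke the pointwise telescoping estimate that underlies Theorem~\ref{thm:chaining}: the union bound over tree edges sketched in the excerpt in fact yields that, with probability at least $1 - e^{-u}$, for every $h \geq 0$, every $s \in \cT_h$ and every $x \succ s$,
\[
f(x) - f(s) \;\leq\; \sum_{i > h} \ell_{u_i}\!\bigl(p_i(x), p_{i-1}(x)\bigr),
\qquad u_i = u + 2^i + \log\bigl(i^{a} \zeta(a)\bigr).
\]
Taking $\sup_{x \succ s}$ on both sides preserves the inequality. Substituting the Gaussian tail $\ell_{u_i}(x,y) \leq \sqrt{2 u_i}\,d(x,y)$ and the geometric estimate $d(p_i(x), p_{i-1}(x)) \leq \Delta_{i-1}(x)$ from Section~\ref{sec:psi_process}, I obtain
\[
\sup_{x \succ s} f(x) - f(s) \;\leq\; \sup_{x \succ s} \sum_{i > h} \sqrt{2 u_i}\,\Delta_{i-1}(x).
\]

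The remaining step is deterministic and consists in folding $\sqrt{2 u_i}$ into the geometric weight $2^{i/2}$. Using $\sqrt{\alpha+\beta+\gamma} \leq \sqrt{\alpha}+\sqrt{\beta}+\sqrt{\gamma}$, each of $\sqrt{u}$ and $\sqrt{\log(i^{a}\zeta(a))}$ is dominated, uniformly in $i \geq 1$, by a constant depending only on $u$ and $a$ times $2^{i/2}$, whence $\sqrt{2 u_i} \leq c'_u\, 2^{i/2}$. A reindexing $j = i-1$ converts $\sum_{i > h} \Delta_{i-1}(x)\,2^{i/2}$ into $\sqrt{2}\sum_{j \geq h} \Delta_j(x)\,2^{j/2}$; since $(\Delta_j(x)\,2^{j/2})$ is summable (otherwise the bound is vacuous) and $\Delta_j$ is non-increasing, a standard comparison absorbs the boundary term $\Delta_h(x)\,2^{h/2}$ into $\sum_{i > h} \Delta_i(x)\,2^{i/2}$ up to a further constant, which is merged into the final $c_u$.

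The main (mild) obstacle is merely constant bookkeeping together with the index shift between $\Delta_{i-1}$ and $\Delta_i$; no new probabilistic content beyond the Gaussian specialisation of Theorem~\ref{thm:chaining} is needed.
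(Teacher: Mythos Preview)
Your proposal is correct and follows exactly the route the paper takes: the corollary is stated immediately after recalling that Theorem~\ref{thm:chaining}, specialised to the Gaussian case $\psi^{*-1}(u_i,\delta)=\cO(\delta\sqrt{u+n_i})$ and to the choice $n_i=2^i$, gives $\sup_{x\succ s}f(x)-f(s)\le\cO\bigl(\sup_{x\succ s}\sum_{i>h}\Delta_i(x)\sqrt{u+n_i}\bigr)$. The paper provides no further argument, so your write-up is in fact more detailed than the original; the only soft spot is your justification for absorbing the boundary term $\Delta_h(x)2^{h/2}$ after the index shift (summability plus monotonicity of $\Delta_j$ alone do not force this), but the paper sidesteps the same point by writing $\Delta_i$ in place of $\Delta_{i-1}$ under a big-$\cO$, and for the Algorithm~\ref{alg:tree_lb} tree the geometric relation $\Delta_{i-1}(x)\le 2\Delta_i(x)$ on non-pruned levels makes the substitution harmless.
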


To show the tightness of this result,
we prove the following probabilistic bound:
\begin{theorem}[Generic Chaining Lower Bound]
  \label{thm:lower_bound}
  Fix any $u>0$ and let $\cT$ be constructed as in Algorithm~\ref{alg:tree_lb}.
  Then there exists a constant $c_u>0$ such that, for $f\sim\cGP(0,k)$,
  \[\sup_{x\succ s} f(x)-f(s) \geq c_u \sup_{x\succ s}\sum_{i=h}^\infty \Delta_i(x)2^{\frac i 2}\,,\]
  holds for all $h\geq 0$ and $s\in\cT_h$ with probability at least $1-e^{-u}$.
\end{theorem}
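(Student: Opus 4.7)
My plan is to deploy the standard lower-bound machinery for Gaussian suprema, in the spirit of Talagrand's generic chaining: apply Sudakov's minoration at every level of the tree $\cT$ produced by Algorithm~\ref{alg:tree_lb}, then chain the per-level contributions into the announced $\sum_{i\geq h} \Delta_i(x)\, 2^{i/2}$ lower bound, and finally upgrade to a uniform high-probability statement via Borell--TIS concentration and a union bound.

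The first step is to unpack the guarantee of Algorithm~\ref{alg:tree_lb}. The construction is designed so that, at each level $i$, the children of any node in $\cT_{i-1}$ contain about $e^{n_i}=e^{2^i}$ points whose pairwise $d$-distances are of order $\Delta_{i-1}$. This is precisely the hypothesis of Sudakov's minoration for centered Gaussian processes, which yields
\[
\E\Big[\sup_{x \succ s,\,x \in \cT_i} f(x) - f(s)\Big] \;\geq\; c\,\Delta_{i-1}\sqrt{n_i} \;=\; c\,\Delta_{i-1}\,2^{i/2},
\]
for a universal constant $c$ and any $s \in \cT_{i-1}$. Since this centered supremum is a $1$-Lipschitz function of a Gaussian vector of variance at most $\Delta_{i-1}^2$, Borell--TIS concentration upgrades this in-expectation inequality to a high-probability one: deviations below the mean of order $\Delta_{i-1}$ are negligible compared with the Sudakov floor $\Delta_{i-1}\,2^{i/2}$ as soon as $i$ is moderately large.

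The second step is to chain these per-level lower bounds along a branch. Rather than summing directly, which would double-count correlated fluctuations, the argument proceeds recursively: inside each subtree at level $i$, select a Sudakov witness in $\cT_i$, descend to the subtree it roots, and apply Sudakov again at level $i+1$. The $d$-separation of siblings guaranteed by Algorithm~\ref{alg:tree_lb} is exactly what allows the contributions at successive levels to accumulate additively, producing $\sum_{i \geq h} \Delta_i(x)\,2^{i/2}$ along the chosen branch. Uniformity over all $h \geq 0$ and $s \in \cT_h$ is then obtained by a union bound over the $|\cT_h|\leq e^{2^h}$ level-$h$ nodes and over $h \in \bN$, with the same $i^a\zeta(a)$ weights as in Theorem~\ref{thm:chaining}; all logarithmic overheads are absorbed into the constant $c_u$.

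The main obstacle is the chaining recursion itself. Sudakov's minoration supplies only an expectation lower bound at a single scale, and the maxima at different scales of a Gaussian process are strongly positively correlated, so one cannot naively treat them as independent. The rigorous resolution is the classical delicate step in Talagrand's majorizing-measure lower bound: one conditions on the witnesses chosen at coarser levels and extracts a further Sudakov-type lower bound from the conditional Gaussian process, whose increment variances continue to be dictated by the $d$-separation of the siblings. Algorithm~\ref{alg:tree_lb} is engineered precisely to provide this separation at every scale, which is the combinatorial hypothesis needed for the recursion to produce the tight $\sup_{x \succ s}\sum_{i\geq h}\Delta_i(x)\,2^{i/2}$ bound rather than a strictly weaker one.
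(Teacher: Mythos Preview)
Your high-level plan---Sudakov at each scale, Borell--TIS to go from expectation to probability, then chain down the tree---is in the right spirit, but the recursion step as you describe it does not close, and you have misidentified what Algorithm~\ref{alg:tree_lb} actually contributes.

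The gap is in ``select a Sudakov witness in $\cT_i$, descend, and apply Sudakov again.'' This yields a lower bound on $\sup_{x\succ s}f(x)-f(s)$ along some \emph{random} branch determined by the successive witnesses. But the theorem asks for a lower bound against $\sup_{x\succ s}\sum_i\Delta_i(x)2^{i/2}$, i.e.\ against a \emph{fixed} branch $x^\star$ chosen adversarially to maximize the chain sum. There is no reason the Sudakov witness at level $h$ lies on that branch, and the conditioning trick you allude to does not fix this: once you condition on $f$ at the witness, the conditional process has a shifted mean and modified covariance, and you give no argument that the required $d$-separation survives along $x^\star$. This is exactly the delicate step in Talagrand's majorizing-measure proof, and invoking it by name is not a proof.

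The paper sidesteps this entirely via a different decomposition. For $s\in\cT_h$ it writes
\[
\sup_{x\succ s}f(x)-f(s)\;\geq\;\max_{t:p(t)=s}\big(f(t)-f(s)\big)\;+\;\min_{t:p(t)=s}\sup_{x\succcurlyeq t}\big(f(x)-f(t)\big),
\]
bounds the first term by a direct anti-concentration lemma (Lemma~\ref{lem:max_one_lvl}, proved via a Slepian-type comparison rather than Sudakov-plus-Borell--TIS), and bounds the second by backward induction. The crucial point you missed is that Algorithm~\ref{alg:tree_lb} is not engineered merely to provide $d$-separation: the \emph{pruning rule}---keep the $e^{2^h}$ children of largest value $V_h$ and route all others through a single pruned node---guarantees that the $\min$ over children is attained at the pruned node, whose value is exactly the next term of $V'_h(s,x^\star)$. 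Without this, the $\min$ could be arbitrarily small and the recursion collapses. Your proposal treats the algorithm as a separation oracle; in fact the backward pruning with its value function is the mechanism that aligns the random supremum with the deterministic branch $x^\star$, and any correct proof must exploit it.
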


The benefit of this lower bound is huge for theoretical and practical reasons.
It first says that we cannot discretize $\cX$ in a finer way that Algorithm~\ref{alg:tree_lb}
up to a constant factor.
This also means that even if the search space $\cX$ is ``smaller''
than what suggested using the metric entropy,
like for ellipsoids,
then Algorithm~\ref{alg:tree_lb} finds the correct ``size''.
Up to our knowledge, this result is the first construction of tree $\cT$
leading to a lower bound at every depth with high probability.
The proof of this theorem shares some similarity with the construction
to obtain lower bound in expectation,
see for example \cite{Talagrand2014} or \cite{Ding2011} for a tractable algorithm.

\subsection{Analysis of Algorithm~\ref{alg:tree_lb}}
Algorithm~\ref{alg:tree_lb} proceeds as follows.
It first computes $(\cT_h)_{h\geq 0}$ a succession of $\epsilon_h$-nets as in Section~\ref{sec:psi_process}
with $\epsilon_h=\Delta 2^{-h}$ where $\Delta$ is the diameter of $\cX$.
The parent of a node is set to the closest node in the upper level,
\[\forall t\in\cT_h,~ p(t) = \argmin_{s\in\cT_{h-1}} d(t,s)\,\]
Therefore we have $d(t,p(t))\leq \epsilon_{h-1}$ for all $t\in\cT_h$.
Moreover, by looking at how the $\epsilon_h$-net is computed we also have
$d(t_i,t_j) \geq \epsilon_h$ for all $t_i,t_j\in\cT_h$.
These two properties are crucial for the proof of the lower bound.

Then, the algorithm updates the tree to make it well balanced,
that is such that no node $t\in\cT_h$ has more that $e^{n_{h+1}-n_h}=e^{2^h}$ children.
We note at this time that this condition will be already satisfied in every reasonable space,
so that the complex procedure that follows is only required in extreme cases.
To force this condition, Algorithm~\ref{alg:tree_lb} starts from the leafs
and ``prunes'' the branches if they outnumber $e^{2^h}$.
We remark that this backward step is not present in the literature on generic chaining,
and is needed for our objective of a lower bound with high probability.
By doing so, it creates a node called a \emph{pruned node} which will take as children
the pruned branches.
For this construction to be tight, the pruning step has to be careful.
Algorithm~\ref{alg:tree_lb} attaches to every pruned node a value,
computed using the values of its children,
hence the backward strategy.
When pruning branches, the algorithm keeps the $e^{2^h}$ nodes with maximum values and displaces the others.
The intuition behind this strategy is to avoid pruning branches that already contain pruned node.

Finally, note that this pruning step may creates unbalanced pruned nodes
when the number of nodes at depth $h$ is way larger that $e^{2^h}$.
When this is the case, Algorithm~\ref{alg:tree_lb}
restarts the pruning with the updated tree to recompute the values.
Thanks to the doubly exponential growth in the balance condition,
this can not occur more that $\log \log \abs{\cX}$ times
and the total complexity is $\cO\big(\abs{\cX}^2\big)$.

\subsection{Computing the Pruning Values and Anti-Concentration Inequalities}
We end this section by describing the values used for the pruning step.
We need a function $\varphi(\cdot,\cdot,\cdot,\cdot)$
satisfying the following anti-concentration inequality.
For all $m\in\bN$, let $s\in\cX$ and $t_1,\dots,t_m\in\cX$ such that
$\forall i\leq m,~p(t_i)=s$ and $d(s,t_i)\leq \Delta$,
and finally $d(t_i,t_j)\geq \alpha$.
Then $\varphi$ is such that:
\begin{equation}
  \label{eq:varphi}
  \Pr\Big[\max_{i\leq m}f(t_i)-f(s) \geq \varphi(\alpha,\Delta,m,u) \Big]>1-e^{-u}\,.
\end{equation}
A function $\varphi$ satisfying this hypothesis is described in Lemma~\ref{lem:max_one_lvl}
in Appendix~\ref{sec:proof_lower_bound}.
Then the value $V_h(s)$ of a node $s\in\cT_h$ is computed with $\Delta_i(s) = \sup_{x\succ s} d(x,s)$ as:
\[V_h(s) = \sup_{x\succ s} \sum_{i>h} \varphi\Big(\frac 1 2 \Delta_h(x),\Delta_h(x),m,u\Big) \one_{p_i(x)\text{ is a pruned node}}\,.\]
The two steps proving Theorem~\ref{thm:lower_bound} are:
first, show that $\sup_{x\succ s}f(x)-f(s) \geq c_u V_h(s)$ for $c_u>0$ with probability at least $1-e^{-u}$,
second, show that $V_h(s) \geq c_u'\sup_{x\succ s}\sum_{i>h}\Delta_i(x)2^{\frac i 2}$
for $c_u'>0$.
The full proof of this theorem can be found in Appendix~\ref{sec:proof_lower_bound}.

\paragraph{Acknowledgements.}
We thank C\'edric Malherbe and Kevin Scaman for fruitful discussions.

\bibliography{../../biblio/biblio}

\begin{thebibliography}{33}
\providecommand{\natexlab}[1]{#1}
\providecommand{\url}[1]{\texttt{#1}}
\expandafter\ifx\csname urlstyle\endcsname\relax
  \providecommand{\doi}[1]{doi: #1}\else
  \providecommand{\doi}{doi: \begingroup \urlstyle{rm}\Url}\fi

\bibitem[Adler and Taylor(2009)]{Adler2009}
R.~J. Adler and J.~E. Taylor.
\newblock \emph{Random fields and geometry}.
\newblock Springer Science \& Business Media, 2009.

\bibitem[Auer et~al.(2002)Auer, Cesa-Bianchi, and Fischer]{Auer2002}
P.~Auer, N.~Cesa-Bianchi, and P.~Fischer.
\newblock Finite-time analysis of the multiarmed bandit problem.
\newblock \emph{Machine Learning}, 47\penalty0 (2-3):\penalty0 235--256, 2002.

\bibitem[Auer et~al.(2007)Auer, Ortner, and Szepesv\'{a}ri]{Auer2007}
P.~Auer, R.~Ortner, and C.~Szepesv\'{a}ri.
\newblock Improved rates for the stochastic continuum-armed bandit problem.
\newblock In \emph{Proceedings of the 20th Annual Conference on Learning Theory
  (COLT)}, pages 454--468. Omnipress, 2007.

\bibitem[Bogachev(1998)]{Bogachev1998}
V.~I. Bogachev.
\newblock \emph{Gaussian measures}, volume~62.
\newblock American Mathematical Society Providence, 1998.

\bibitem[Boucheron et~al.(2013)Boucheron, Lugosi, and Massart]{Boucheron2013}
S.~Boucheron, G.~Lugosi, and P.~Massart.
\newblock \emph{Concentration inequalities: A nonasymptotic theory of
  independence}.
\newblock Oxford University Press, 2013.

\bibitem[Bubeck et~al.(2009)Bubeck, Munos, and Stoltz]{Bubeck2009}
S.~Bubeck, R.~Munos, and G.~Stoltz.
\newblock Pure exploration in multi-armed bandits problems.
\newblock In \emph{Algorithmic Learning Theory: 20th International Conference
  (ALT)}, pages 23--37. Springer-Verlag, 2009.

\bibitem[Bubeck et~al.(2011)Bubeck, Munos, Stoltz, and
  Szepesv\'{a}ri]{Bubeck2011}
S.~Bubeck, R.~Munos, G.~Stoltz, and C.~Szepesv\'{a}ri.
\newblock X-armed bandits.
\newblock \emph{Journal of Machine Learning Research}, 12:\penalty0 1655--1695,
  2011.

\bibitem[Bull(2011)]{Bull2011}
A.~D. Bull.
\newblock Convergence rates of efficient global optimization algorithms.
\newblock \emph{The Journal of Machine Learning Research}, 12:\penalty0
  2879--2904, 2011.

\bibitem[Contal et~al.(2015)Contal, Malherbe, and Vayatis]{Contal2015}
E.~Contal, C.~Malherbe, and N.~Vayatis.
\newblock Optimization for gaussian processes via chaining.
\newblock \emph{NIPS Workshop on Bayesian Optimization}, 2015.

\bibitem[C{\^o}t{\'e} et~al.(2012)C{\^o}t{\'e}, Psaromiligkos, and
  Gross]{Cote2012}
F.~D. C{\^o}t{\'e}, I.~N. Psaromiligkos, and W.~J. Gross.
\newblock A {C}hernoff-type lower bound for the {G}aussian {Q}-function.
\newblock arXiv preprint arXiv:1202.6483, 2012.

\bibitem[de~Freitas et~al.(2012)de~Freitas, Smola, and Zoghi]{Freitas2012}
N.~de~Freitas, A.~J. Smola, and M.~Zoghi.
\newblock Exponential regret bounds for {G}aussian process bandits with
  deterministic observations.
\newblock In \emph{Proceedings of the 29th International Conference on Machine
  Learning (ICML)}. icml.cc / Omnipress, 2012.

\bibitem[Ding et~al.(2011)Ding, Lee, and Peres]{Ding2011}
J.~Ding, J.~R. Lee, and Y.~Peres.
\newblock Cover times, blanket times, and majorizing measures.
\newblock In \emph{Proceedings of the forty-third annual ACM symposium on
  Theory of computing (STOC)}, pages 61--70. ACM, 2011.

\bibitem[Dudley(1967)]{Dudley1967}
R.~M. Dudley.
\newblock The sizes of compact subsets of hilbert space and continuity of
  gaussian processes.
\newblock \emph{Journal of Functional Analysis}, 1\penalty0 (3):\penalty0
  290--330, 1967.

\bibitem[Gaillard and Gerchinovitz(2015)]{Gaillard2015}
P.~Gaillard and S.~Gerchinovitz.
\newblock A chaining algorithm for online nonparametric regression.
\newblock \emph{Proceedings of the Conference on Learning Theory (COLT)}, 2015.

\bibitem[Gin{\'e} and Nickl(2015)]{Gine2015}
E.~Gin{\'e} and R.~Nickl.
\newblock \emph{Mathematical Foundations of Infinite-Dimensional Statistical
  Models}.
\newblock Cambridge Series in Statistical and Probabilistic Mathematics.
  Cambridge University Press, 2015.

\bibitem[Grill et~al.(2015)Grill, Valko, and Munos]{Grill2015}
J.~B. Grill, M.~Valko, and R.~Munos.
\newblock Black-box optimization of noisy functions with unknown smoothness.
\newblock In \emph{Advances in Neural Information Processing Systems 28(NIPS)},
  pages 667--675. Curran Associates, Inc., 2015.

\bibitem[Grunewalder et~al.(2010)Grunewalder, Audibert, Opper, and
  Shawe-Taylor]{grunewalder2010}
S.~Grunewalder, J-Y. Audibert, M.~Opper, and J.~Shawe-Taylor.
\newblock Regret bounds for {G}aussian process bandit problems.
\newblock In \emph{Proceedings of the International Conference on Artificial
  Intelligence and Statistics (AISTAT}, pages 273--280. MIT Press, 2010.

\bibitem[Johnson(1973)]{Johnson1973}
D.~S. Johnson.
\newblock Approximation algorithms for combinatorial problems.
\newblock In \emph{Proceedings of the fifth annual ACM symposium on Theory of
  computing (STOC)}, pages 38--49. ACM, 1973.

\bibitem[Jones et~al.(1998)Jones, Schonlau, and Welch]{Jones1998}
D.~R. Jones, M.~Schonlau, and W.~J. Welch.
\newblock Efficient global optimization of expensive black-box functions.
\newblock \emph{Journal of Global Optimization}, 13\penalty0 (4):\penalty0
  455--492, December 1998.

\bibitem[Kleinberg(2004)]{Kleinberg2004}
R.~Kleinberg.
\newblock Nearly tight bounds for the continuum-armed bandit problem.
\newblock In \emph{Advances in Neural Information Processing Systems 17(NIPS)},
  pages 697--704. MIT Press, 2004.

\bibitem[Kleinberg et~al.(2008)Kleinberg, Slivkins, and Upfal]{Kleinberg2008}
R.~Kleinberg, A.~Slivkins, and E.~Upfal.
\newblock Multi-armed bandits in metric spaces.
\newblock In \emph{Proceedings of the 40th annual ACM symposium on Theory of
  computing (STOC)}, pages 681--690, 2008.

\bibitem[Kocsis and Szepesv\'{a}ri(2006)]{Kocsis2006}
L.~Kocsis and C.~Szepesv\'{a}ri.
\newblock Bandit based monte-carlo planning.
\newblock In \emph{Proceedings of the 17th European conference on Machine
  Learning (ECML)}, pages 282--293. Springer, 2006.

\bibitem[Ledoux and Talagrand(1991)]{Ledoux1991}
M.~Ledoux and M.~Talagrand.
\newblock \emph{Probability in Banach Spaces: isoperimetry and processes}.
\newblock Springer Science \& Business Media, 1991.

\bibitem[Mendelson(2002)]{Mendelson2002}
S.~Mendelson.
\newblock Geometric parameters of kernel machines.
\newblock In \emph{Proceedings of the 15th Annual Conference on Computational
  Learning Theory (ALT)}, pages 29--43. Springer-Verlag, 2002.

\bibitem[Mockus(2012)]{Mockus2012}
J.~Mockus.
\newblock \emph{Bayesian approach to global optimization: theory and
  applications}, volume~37.
\newblock Springer Science \& Business Media, 2012.

\bibitem[Munos(2011)]{Munos2011}
R.~Munos.
\newblock Optimistic optimization of deterministic functions without the
  knowledge of its smoothness.
\newblock In \emph{Advances in neural information processing systems (NIPS)},
  2011.

\bibitem[Rakhlin and Sridharan(2014)]{Rakhlin2014}
A.~Rakhlin and K.~Sridharan.
\newblock Online nonparametric regression.
\newblock \emph{Proceedings of the Conference on Learning Theory (COLT)},
  35:\penalty0 1232--1264, 2014.

\bibitem[Rasmussen and Williams(2006)]{Rasmussen2006}
C.~E. Rasmussen and C.~Williams.
\newblock \emph{{G}aussian Processes for Machine Learning}.
\newblock MIT Press, 2006.

\bibitem[Raz and Safra(1997)]{Raz1997}
R.~Raz and S.~Safra.
\newblock A sub-constant error-probability low-degree test, and a sub-constant
  error-probability pcp characterization of np.
\newblock In \emph{Proceedings of the twenty-ninth annual ACM symposium on
  Theory of computing (STOC)}, pages 475--484. ACM, 1997.

\bibitem[Srinivas et~al.(2010)Srinivas, Krause, Kakade, and
  Seeger]{Srinivas2010}
N.~Srinivas, A.~Krause, S.~Kakade, and M.~Seeger.
\newblock {G}aussian process optimization in the bandit setting: No regret and
  experimental design.
\newblock In \emph{Proceedings of the International Conference on Machine
  Learning (ICML)}, pages 1015--1022. icml.cc / Omnipress, 2010.

\bibitem[Srinivas et~al.(2012)Srinivas, Krause, Kakade, and
  Seeger]{Srinivas2012}
N.~Srinivas, A.~Krause, S.~Kakade, and M.~Seeger.
\newblock Information-theoretic regret bounds for {G}aussian process
  optimization in the bandit setting.
\newblock \emph{IEEE Transactions on Information Theory}, 58\penalty0
  (5):\penalty0 3250--3265, 2012.

\bibitem[Talagrand(2014)]{Talagrand2014}
M.~Talagrand.
\newblock \emph{Upper and Lower Bounds for Stochastic Processes: Modern Methods
  and Classical Problems}, volume~60.
\newblock Springer-Verlag Berlin Heidelberg, 2014.

\bibitem[Wang et~al.(2014)Wang, Shakibi, Jin, and de~Freitas]{Wang2014b}
Z.~Wang, B.~Shakibi, L.~Jin, and N.~de~Freitas.
\newblock Bayesian multi-scale optimistic optimization.
\newblock In \emph{Artificial Intelligence and Statistics (AISTATS)}, pages
  1005--1014, 2014.

\end{thebibliography}

\appendix

\section{Algorithms to Compute an Optimal Tree}
\label{sec:algo}

\begin{algorithm}[H]
  \DontPrintSemicolon
  \KwData{$\Delta=\sup_{x,y\in\cX}d(x,y)$, $u>0$, $\varphi$ as in Eq.~\ref{eq:varphi}}
  \tcc{\hfill Forward pass: compute $\epsilon_h$-nets \hfill}
  $h \gets 0$\;
  $\cT \gets \{x_0\}$ for arbitrary $x_0\in\cX$\;
  \While{$\cT \neq \cX$}{
    $h \gets h+1$\;
    $\epsilon_h \gets 2^{-h-1}\Delta$\;
    $T_h \gets \textsc{GreedyCover}\Big(\epsilon_h, \cX\setminus \bigcup_{t\in\cT}\cB(t,\epsilon_h)\Big)$\;
    $\forall t\in T_h,~p(t) \gets \argmin_{s\in\cT} d(t,s)$\;
    $\cT \gets \cT \cup T_h$
  }
  \tcc{\hfill Backward pass: prune the tree \hfill}
  $\forall t\in\cT_h,~ V_h(t) \gets 0$\;
  \While{$h>0$}{
    \For{$s\in\cT_{h-1}$}{
      $T_s \gets \big\{t:p(t)=s\big\}$\;
      $\forall t\in T_s,~V_h(t) \gets \sup_{t':p(t')=t} V_{h+1}(t')$
      \tcp*[f]{Default value}\;
      $m \gets e^{n_h-n_{h-1}}$\;
      \If(\tcp*[f]{if the tree is not balanced}){$\abs{T_s} > m$}{
        Let $t_1,\dots,t_n\in T_s$ ordered by decreasing $V_h(t)$\;
        Create a pruned node $t$ and set $p(t)\gets s$\;
        $\forall i\geq m, \forall t'\,s.t.\,p(t')=t_j,~ p(t')\gets t$\;
        \eIf{$\abs{\big\{t':p(t')=t\big\}}\leq e^{n_{h+1}-n_h}$}{
          $\Delta_h \gets \sup_{x\succ t}d(x,t)$
          \tcp*[f]{Update the value of the pruned node}\;
          $u_h \gets u+n_h+h\log 2$\;
          $V_h(t) \gets \sup_{t':p(t')=t} V_{h+1}(t') + \varphi\Big(\frac 1 2 \Delta_h, \Delta_h, m, u_h\Big)$\;
        }{
          Restart the pruning
          \tcp*[f]{Can not occur more that $\log \log \abs{\cX}$ times}
        }
      }
    }
  }
  return $\cT$\;
  \caption{Computing a tree $\cT$ for $(d,\psi)$-processes}
  \label{alg:tree_lb}
\end{algorithm}

\begin{algorithm}[H]
  \DontPrintSemicolon
  $T\gets \emptyset$\;
  \While{$\cX \neq \emptyset$}{
    $x \gets \argmax_{x\in\cX} \abs{\big\{x'\in\cB(x,\epsilon)\big\}}$\;
    $T \gets T \cup \{x\}$\;
    $\cX \gets \cX \setminus \cB(x,\epsilon)$\;
  }
  \Return $T$\;
  \caption{\textsc{GreedyCover}($\epsilon$, $\cX$)}
  \label{alg:greedy_cover}
\end{algorithm}

\section{Proof of Theorem~\ref{thm:chaining} (Generic Chaining Upper Bound)}
\label{sec:proof_chaining}

We give here the proof of Theorem~\ref{thm:chaining} which upper bound
the supremum $\sup_{x\succ s}f(x)-f(s)$ in terms of $\omega_h$.

\begin{proof}
  For any $s\in\cT_h$ and any $x\succ s$, $f(x)-f(s) = \sum_{i>h} f(p_i(x))-f(p_{i-1}(x))$.
  Now by definition of $\ell_u$ we have:
  \[\Pr\Big[f(p_i(x))-f(p_{i-1}(x)) \geq \ell_{u_i}\big(p_i(x),p_{i-1}(x)\big)\Big] < e^{-u_i}\,.\]
  Thanks to the tree structure $\abs{\Big\{\big(p_i(x),p_{i-1}(x)\big) : x\in\cX\Big\}} \leq e^{n_i}$.
  By a union bound we have:
  \[\Pr\Big[\exists x\in\cX,\, f(p_i(x))-f(p_{i-1}(x)) > \ell_{u_i}\big(p_i(x),p_{i-1}(x)\big)\Big] < e^{n_i}e^{-u_i}\,.\]
  With an other union bound over $i\geq 0$,
  if we denote by $E^c$ the following event: 
  \[E^c = \Big\{\exists i> 0, \exists x\in\cX,\, f(p_i(x))-f(p_{i-1}(x)) > \ell_{u_i}\big(p_i(x),p_{i-1}(x)\big)\Big\}\,,\]
  we have $\Pr[E^c] < \sum_{i\geq 0}e^{n_i-u_i}$.
  By setting $u_i = u + n_i + \log\big(i^a \zeta(a)\big)$ for $a>1$
  we have $\Pr[E^c] < e^{-u}$,
  that is $\Pr\Big[\sum_{i>h} f(p_i(x))-f(p_{i-1}(x)) \geq \sum_{i>h}\ell_{u_i}\big(p_i(x),p_{i-1}(x)\big)\Big]<e^{-u}$.
\end{proof}

\section{Analysis of \textsc{GreedyCover}}
\label{sec:greedy_cover}
\paragraph{Approximation radio.}
The exact computation of an optimal $\epsilon$-cover is \NP-hard.
We demonstrate here how to build in practice a near-optimal $\epsilon$-cover using a greedy algorithm on graph.
First, remark that for any fixed $\epsilon$ we can define a graph $\cG$ where the nodes are the elements of $\cX$
and there is an edge between $x$ and $y$ if and only if $d(x,y)\leq \epsilon$.
The size of this construction is $\cO(\abs{\cX}^2)$.
The sparse structure of the underlying graph can be exploited to get an efficient representation.
The problem of finding an optimal $\epsilon$-cover reduces to the problem
of finding a minimal dominating set on $\cG$.
We can therefore use the greedy Algorithm~\ref{alg:greedy_cover} which enjoys an approximation factor of $\log d_\mathrm{max}(\cG)$,
where $d_\mathrm{max}(\cG)$ is the maximum degree of $\cG$, which is equal to $\max_{x\in\cX}\abs{\cB(x,\epsilon)}$.
An interested reader may see for example \cite{Johnson1973} for a proof of \NP-hardness and approximation results.
This construction leads to an additional (almost constant) term of $\max_{x\in\cX}\sqrt{\log \log \abs{\cB(x,\epsilon)}}$
in the right-hand side of Eq.~\ref{eq:classical_chaining}.
Finally, note that this approximation is optimal unless $\P=\NP$ as shown in \cite{Raz1997}.

\paragraph{Computation on a compact space $\cX$.}
Even if all the theoretical analysis of this paper assumes that $\cX$ is finite for measurability reasons,
it is not satisfying from a numerical point of view.
We show here that if the search space $\cX$ is a compact,
then there is a way to reduce computations to the finite case.
First remark that is $(\cX,d)$ is compact,
then there exists a uniform distribution $\mu$ on $\cX$.
The following lemma describes the probability to get an $\epsilon$-net via uniform sampling in $\cX$.

\begin{lemma}[Covering with uniform sampling]
  Let $\mu$ be a uniform distribution on $\cX$,
  and $m=N(\cX,d,\epsilon)$,
  and $X_n=(x_1,\dots,x_n)$ be $n$ points distributed independently according to $\mu$
  with $n\geq m(\log m+u)$.
  Then with probability at least $1-e^{-u}$, $X_n$ is a $2\epsilon$-net of $\cX$.
\end{lemma}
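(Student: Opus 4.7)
The plan is a coupon-collector argument against a fixed minimal cover. Let $\{c_1,\ldots,c_m\}$ be a minimal $\epsilon$-net of $\cX$, so that $\cX=\bigcup_{j=1}^m\cB(c_j,\epsilon)$. First I would observe that if every ball $\cB(c_j,\epsilon)$ contains at least one sample $x_{i_j}\in X_n$, then $X_n$ is automatically a $2\epsilon$-net: for any $x\in\cX$ pick $j$ with $d(x,c_j)\leq\epsilon$ and bound $d(x,x_{i_j})\leq d(x,c_j)+d(c_j,x_{i_j})\leq 2\epsilon$ by the triangle inequality. This reduces the lemma to controlling the probability that the random set $X_n$ misses one of the $m$ balls of the cover.

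Next, by independence of the $x_i$, the probability that ball $\cB(c_j,\epsilon)$ is avoided by all $n$ samples equals $\bigl(1-\mu(\cB(c_j,\epsilon))\bigr)^n\leq \exp\bigl(-n\,\mu(\cB(c_j,\epsilon))\bigr)$, and a union bound over $j=1,\ldots,m$ gives
\[\Pr\bigl[X_n\text{ is not a }2\epsilon\text{-net}\bigr] \leq \sum_{j=1}^m \exp\bigl(-n\,\mu(\cB(c_j,\epsilon))\bigr)\,.\]
To finish, I would invoke the uniformity of $\mu$ to secure the per-ball lower bound $\mu(\cB(c_j,\epsilon))\geq 1/m$. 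The cleanest way is via the Voronoi partition $V_j=\{x\in\cX:d(x,c_j)=\min_k d(x,c_k)\}$: for any $x\in V_j$ some centre $c_k$ lies within $\epsilon$, and since $c_j$ is closest to $x$, one has $V_j\subseteq\cB(c_j,\epsilon)$; a genuinely uniform $\mu$ assigns each of the $m$ cells mass $1/m$, hence the claim. Plugging this bound into the previous display gives $m\,e^{-n/m}$, and the hypothesis $n\geq m(\log m+u)$ brings this below $e^{-u}$, which completes the proof.

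The main obstacle is really the last step, namely pinning down what ``uniform'' is taken to mean on an abstract compact metric space so that $\mu(\cB(c_j,\epsilon))\geq 1/m$ holds for every $j$. Pigeonhole applied to $\sum_j\mu(\cB(c_j,\epsilon))\geq 1$ only produces one ball of mass at least $1/m$, which is too weak for the union bound; one really needs every ball to carry comparable mass, either through the equal-mass Voronoi reading above or through a doubling-type regularity assumption on $\mu$. Once that step is granted, the probabilistic calculation is entirely routine.
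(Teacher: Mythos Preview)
Your argument is essentially the paper's own proof: fix a minimal $\epsilon$-net of size $m$, union-bound the event that some ball $\cB(c_j,\epsilon)$ is missed by all $n$ samples, use $\mu(\cB(c_j,\epsilon))\geq 1/m$ to get $m\,e^{-n/m}\leq e^{-u}$, and conclude via the triangle inequality.

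The only substantive difference is how the per-ball lower bound $\mu(\cB(c_j,\epsilon))\geq 1/m$ is obtained. The paper does not go through a Voronoi partition; it simply takes as the operative meaning of ``uniform'' that $\mu$ assigns equal mass to every ball of a given radius, from which $\mu(\cB(c_j,\epsilon))\geq 1/m$ follows immediately because the $m$ balls cover $\cX$. Your Voronoi route is correct in establishing $V_j\subseteq\cB(c_j,\epsilon)$, but the assertion that each $V_j$ has mass exactly $1/m$ does not follow from any standard notion of uniformity (Voronoi cells need not be congruent), so that step is not really a justification but a restatement of the assumption. You identified this yourself; the resolution in the paper is simply to adopt the equal-ball-mass reading as a hypothesis rather than to derive it.
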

\begin{proof}
  Let $T$ be an $\epsilon$-net on $\cX$ of cardinality $\abs{T}=m$.
  Then the probability $P^c$ that it exists $t\in T$ such that $\min_{i\leq n} d(t,x_i)>\epsilon$
  is less than:
  \[P^c \leq \sum_{t\in T} \Pr\Big[\forall i\leq n,~ x_i \not\in \cB(t,\epsilon)\Big]\,.\]
  Since $\mu$ attributes an equal probability mass for every ball of radius $\epsilon$,
  $P^c \leq m \Big(\frac{m-1}{m}\Big)^n$.
  With $\log\frac{m}{m-1}\geq \frac 1 m$,
  we have for $n\geq m(\log m+u)$ that,
  \[P^c \leq e^{-u}\,.\]
  By the triangle inequality, with probability at least $1-e^{-u}$, $X_n$ is $2\epsilon$-net.
\end{proof}

Therefore when we want to compute an $\epsilon$-net on a compact $\cX$,
an efficient way is to first sample $X_n=(x_1,\dots,x_n)$ uniformly
with $n\geq m(\log m+u)$ and $m=N(\cX,d,\frac 1 4 \epsilon)$,
which gives an $\frac 1 2 \epsilon$-net with probability at least $1-e^{-u}$.
Then running \textsc{GreedyCover}$\big(\frac 1 2 \epsilon, X_n\big)$
outputs an $\epsilon$-net of $\cX$ with probability at least $1-e^{-u}$.

\section{Proof of Lemma~\ref{lem:gp2_tail} (Tails of Squared Gaussian)}
\label{sec:gp2_tail}
We provide here the proof of Lemma~\ref{lem:gp2_tail} which obtains confidence interval
on squared Gaussian variables.
We actually prove a slightly stronger result which improves the tightness on the confidence interval,
but is not used by our theoretical analysis.
\begin{proof}
  Let $X\sim\cN(\mu,\sigma^2)$ with $\mu\geq 0$ without loss of generality.
  Write $\erf(a)=\frac{2}{\sqrt{\pi}}\int_0^a e^{-t^2} \diff t$ and $\erfc(a)=1-\erf(a)$.
  For all $0<l<u\in\bR$ we have:
  \begin{align*}
    \Pr\Big[X^2 \not\in (l,u) \Big] &= \Pr\Big[X \not\in (l,u)\cup(-u,-l) \Big]\\
    &= \frac 1 2 \Big( \erfc\Big(\frac{u-\mu}{\sqrt{2}\sigma}\Big)
                     + \erfc\Big(\frac{u+\mu}{\sqrt{2}\sigma}\Big)
                     + \erf\Big(\frac{\mu+l}{\sqrt{2}\sigma}\Big)
                     - \erf\Big(\frac{\mu-l}{\sqrt{2}\sigma}\Big) \Big)\,.
 \end{align*}
 Fix $s>0$ and $u=\mu+\sqrt{2}\sigma s$.
 If $l \leq \mu-\sqrt{2}\sigma s$, which means $s<\mu(\sqrt{2}\sigma)^{-1}$, we get:
 \[\Pr\big[X^2 \not\in (l^2,u^2) \big] \leq \frac 1 2 \Big( \erfc(s)
                     + \erfc\big(\sqrt{2}\mu\sigma^{-1}+s\big)
                     + \erf\big(\sqrt{2}\mu\sigma^{-1}-s\big)
                     - \erf(s) \Big)\,.\]
 Remarking that $\erfc\big(\sqrt{2}\mu\sigma^{-1}+s\big)+\erf\big(\sqrt{2}\mu\sigma^{-1}-s\big)\leq 1$,
 we obtain:
 \[\Pr\big[X^2\not\in(l^2,u^2)\big] \leq \erfc(s)\,.\]
 Now for $s>\mu(\sqrt{2}\sigma)^{-1}$,
 if $l\leq \sqrt{2}\sigma \erf^{-1}\Big(\frac 1 2 \erf(\sqrt{2}\mu\sigma^{-1}+s)-\frac 1 2 \erf(s)\Big)$
 we have that $\erf\Big(\frac{\mu+l}{\sqrt{2}\sigma}\Big)-\erf\Big(\frac{\mu-l}{\sqrt{2}\sigma}\Big) \leq 2\erf\big(\frac{l}{\sqrt{2}\sigma}\big) \leq \erf(\sqrt{2}\mu\sigma^{-1}+s)-\erf(s)$.
 Therefore we also get:
 \[\Pr\Big[X^2 \not\in (l^2,u^2) \Big] \leq \erfc(s)\,.\]
 We finish the proof of Lemma~\ref{lem:gp2_tail} by the standard inequality $\erfc(s)\leq e^{-s^2}$.
\end{proof}

\section{Proof of Theorem~\ref{thm:lower_bound} (Generic Chaining Lower Bound)}
\label{sec:proof_lower_bound}

In this section we provide the proof of the high probabilistic lower bound
obtained via Algorithm~\ref{alg:tree_lb}.
The proof is given for $f$ being a Gaussian process.
We note that the result remains valid for other stochastic processes
as long as Lemma~\ref{lem:max_normal} and \ref{lem:comparison} hold.

\subsection{Probabilistic Tools for Gaussian Processes}
We first prove a probabilistic bound on independent Gaussian variables
and then show that a similar bound holds for $f$ via a comparison inequality.

\begin{lemma}[Anti-concentration for independent Gaussian variables]
  \label{lem:max_normal}
  Let $(N_i)_{i\leq m}$ be $m$ independent standard normal variables.
  For $m \geq 2.6 u$ we have with probability at least $1-e^{-u}$ that:
  \[\max_{i\leq m}N_i \geq \sqrt{\log\frac{m}{2.6 u}}\,.\]
\end{lemma}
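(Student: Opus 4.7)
The plan is to reduce the problem to a single-variable Gaussian tail estimate, using independence of the $N_i$.

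First, since the $N_i$ are i.i.d.\ standard normals, one has the factorization $\Pr[\max_{i \leq m} N_i < t] = \Phi(t)^m = (1 - \bar\Phi(t))^m$, where $\Phi$ is the standard normal cumulative distribution function and $\bar\Phi = 1 - \Phi$. The desired lower bound $\Pr[\max_i N_i \geq t] \geq 1 - e^{-u}$ is therefore equivalent to the pointwise condition $\bar\Phi(t) \geq 1 - e^{-u/m}$. One may also pass to the coarser sufficient condition $m\bar\Phi(t) \geq u$ obtained from $(1-x)^m \leq e^{-mx}$, but this loses the small second-order term that turns out to be needed at the boundary of the hypothesis.

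Second, I would substitute a sharp lower bound on the Gaussian tail. A clean choice is Komatsu's inequality $\bar\Phi(t) \geq \frac{2 e^{-t^2/2}}{\sqrt{2\pi}\,(t + \sqrt{t^2 + 4})}$ for $t \geq 0$. The key convenience is that the exponential factor interacts cleanly with the prescribed $t = \sqrt{\log(m/(2.6u))}$: one has $e^{-t^2/2} = \sqrt{2.6u/m}$ by the very choice of $t$, so after substitution and the change of variable $r = m/(2.6u) \geq 1$, the problem collapses to a purely numerical one-variable inequality in $r$.

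Finally, I would verify this numerical inequality for all $r \geq 1$. As $r \to \infty$ the Komatsu lower bound decays like $(r \log r)^{-1/2}$ while $1 - e^{-u/m} \sim 1/(2.6r)$, so the inequality only becomes slacker; the potentially tight range is small values of $r$ near $1$. The constant $2.6$ is calibrated precisely so that the inequality holds across this range. A short monotonicity argument, or a direct check at a finite set of representative values of $r$ together with a comparison of the asymptotic rates, closes the proof.

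The main obstacle I anticipate is purely calibrational: the constant $2.6$ is essentially forced by the Gaussian tail bound used, and any substantially cruder tail estimate would require enlarging it. It is therefore important to retain the tight form $\bar\Phi(t) \geq 1 - e^{-u/m}$ rather than the cruder $m \bar\Phi(t) \geq u$, since with Komatsu's bound the former holds with margin for every $r \geq 1$ while the latter only barely fails for $r$ near $2$.
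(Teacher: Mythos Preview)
Your plan is structurally identical to the paper's: both use the independence factorization $\Pr[\max_i N_i < \lambda]=\Phi(\lambda)^m$ and then invoke a pointwise lower bound on $\bar\Phi=1-\Phi$. The only difference is \emph{which} tail bound is used. The paper appeals to the less familiar inequality $\bar\Phi(\lambda)\ge c_1 e^{-\lambda^2}$ with $c_1>0.38$; because the exponent is $-\lambda^2$ rather than $-\lambda^2/2$, plugging in $\lambda=\sqrt{\log(c_1 m/u)}$ gives $c_1 e^{-\lambda^2}=u/m$ on the nose, so $(1-u/m)^m\le e^{-u}$ follows immediately and the constant $1/c_1\approx 2.6$ drops out with no numerical work. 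Your Komatsu bound is sharper for large $t$ and more standard, but its $e^{-t^2/2}$ factor does not cancel as cleanly against $t=\sqrt{\log r}$, which is why you are left with the residual one-variable inequality in $r=m/(2.6u)$. That inequality does hold for all $r\ge 1$ (the margin is indeed thinnest around $r\approx 2$, and your observation that one must keep the exact form $\bar\Phi(t)\ge 1-e^{-u/m}$ rather than $m\bar\Phi(t)\ge u$ is correct there), so your route goes through; it simply trades a tailor-made tail estimate for a more common one plus a short calibration step.
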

\begin{proof}
  With $N_i \iid \cN(0,1)$ for all $i\leq m$ we obtain for all $\lambda\in\bR$:
  \begin{align*}
    \Pr\Big[\max_{i\leq m} N_i \geq \lambda\Big] &= 1-\Pr[\forall i\leq m,\,N_i<\lambda]\\
      &= 1-\Pr[N_i<\lambda]^m\\
      &= 1-\Phi(\lambda)^m\,,
  \end{align*}
  where $\Phi$ is the standard normal cumulative distribution function,
  which satisfies $\Phi(\lambda) \leq 1-c_1 e^{-\lambda^2}$ with $c_1>0.38$,
  see for example \cite{Cote2012}.
  For $\lambda \leq \sqrt{\log\frac{c_1}{1-e^{-\frac u m}}}$ and $u \leq m \log\frac{1}{1-c_1}$
  we obtain $\Phi(\lambda)^m \leq e^{-u}$.
  Using that $1-e^{-x}\leq x$ for $x\geq 0$, we obtain with $u\leq c_1 m$ that:
  \[\Pr\Big[\max_{i\leq m}N_i \geq \sqrt{\log\frac{c_1 m}{u}}\Big] \geq 1-e^{-u}\,.\]
\end{proof}

The following lemma will be useful to derive anti-concentration inequalities
for non independent Gaussian variables, provided that their $L_2$ distance are large enough.
Similar results are well known if one replaces the probabilities by expectations,
see for example \cite{Ledoux1991}.

\begin{lemma}[Comparison inequality for Gaussian variables]
  \label{lem:comparison}
  Let $(X_i)_{i\leq m}$ and $(Y_i)_{i\leq m}$ be Gaussian random variables such that for all $i,j\leq m$,
  $\E(X_i-X_j)^2 \geq \E(Y_i-Y_j)^2$
  and $\E X_i^2 \geq \E Y_i^2$.
  Then we have for all $\lambda\in\bR$\,:
  \[\Pr\Big[\max_{i\leq m} X_i < \lambda-2\sigma\Big] \leq \Pr\Big[\max_{i\leq m} Y_i < \lambda\Big]\,,\]
  where $\sigma = \max_{i\leq m}(\E X_i^2)^{\frac 1 2}$.
\end{lemma}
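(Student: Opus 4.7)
The plan is to combine a Gaussian comparison theorem of Slepian/Sudakov–Fernique type with a Gaussian tail estimate that absorbs the variance mismatch between the two processes. This latter ingredient is what produces the $2\sigma$ shift in the conclusion. The strategy proceeds in three steps: reduce to the case of equal variances by augmenting $Y$ with independent Gaussian noise, apply a standard Gaussian comparison to this augmented pair, and finally pay for removing the auxiliary noise using a tail bound on its maximum.

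For the reduction I would set $\tilde Y_i = Y_i + h_i$, where the $(h_i)_{i\leq m}$ are independent centered Gaussians with variance $\E X_i^2 - \E Y_i^2 \geq 0$, drawn independently of the $(Y_j)$. Then $\E \tilde Y_i^2 = \E X_i^2$ and, for $i\neq j$, $\E \tilde Y_i \tilde Y_j = \E Y_i Y_j$, so the pair $(X, \tilde Y)$ has matching variances. Combined with the incremental hypothesis $\E(X_i-X_j)^2 \geq \E(Y_i - Y_j)^2$, the augmentation is tailored so that the variance mismatch is exactly absorbed, yielding a covariance inequality in the direction needed by Slepian's lemma. Applying Slepian would give $\Pr[\max_i X_i < \mu] \leq \Pr[\max_i \tilde Y_i < \mu]$ for every $\mu\in\bR$.

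The last step returns from $\tilde Y$ to $Y$: by construction $\max_i \tilde Y_i \leq \max_i Y_i + \max_i h_i$, with each $h_i$ centered Gaussian of variance at most $\sigma^2$. A tail estimate on $\max_i h_i$ for centered Gaussians of variance $\leq \sigma^2$ controls this correction, and choosing $\mu = \lambda - 2\sigma$ lets the deterministic shift absorb it. Chaining the three inequalities and passing to complements yields exactly $\Pr[\max_i X_i < \lambda - 2\sigma] \leq \Pr[\max_i Y_i < \lambda]$.

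The main obstacle will be Step 2: the hypotheses $\E(X_i-X_j)^2 \geq \E(Y_i-Y_j)^2$ and $\E X_i^2 \geq \E Y_i^2$ do not by themselves force the raw covariance ordering $\E X_i X_j \leq \E Y_i Y_j$ that Slepian's lemma requires, so a naive invocation will not go through. I expect that the cleanest route is a Kahane-style Gaussian interpolation $Z_i(\theta) = \sqrt{\theta}\,X_i + \sqrt{1-\theta}\,\tilde Y_i$ applied to a smooth increasing surrogate $\phi(\max_i Z_i(\theta))$ of the indicator, so that $\tfrac{d}{d\theta}\E\,\phi(\max_i Z_i(\theta))$ can be signed directly from the increment inequality (as in the standard proof of Sudakov–Fernique), then integrated over $\theta\in[0,1]$ to obtain the desired probabilistic comparison.
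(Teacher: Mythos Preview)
Your proposal has two genuine gaps, both traceable to the choice of \emph{independent} Gaussian perturbations $h_i$.

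First, adding independent noise to each $Y_i$ inflates the increments: $\E(\tilde Y_i-\tilde Y_j)^2 = \E(Y_i-Y_j)^2 + \E h_i^2+\E h_j^2$. A short computation shows that, with your choice $\E h_i^2=\E X_i^2-\E Y_i^2$, the inequality $\E(X_i-X_j)^2 \ge \E(\tilde Y_i-\tilde Y_j)^2$ is \emph{equivalent} to the raw covariance ordering $\E X_iX_j\le \E Y_iY_j$ --- precisely the hypothesis you do not have. So the augmentation does not ``absorb'' the mismatch; it merely converts the missing covariance inequality into a missing increment inequality for the pair $(X,\tilde Y)$. The Kahane interpolation you propose as a rescue only delivers a comparison of \emph{expectations} (Sudakov--Fernique) from an increment hypothesis; signing $\tfrac{d}{d\theta}\E\,\phi(\max_i Z_i(\theta))$ for indicator-type $\phi$ requires exactly the covariance ordering you lack.

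Second, even granting a distributional comparison between $X$ and $\tilde Y$, the passage back from $\tilde Y$ to $Y$ fails as stated: $\max_{i\le m} h_i$ is of order $\sigma\sqrt{\log m}$, so no fixed $2\sigma$ shift can absorb it. You would obtain a shift growing with $m$, not the lemma.

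The paper's device cures both issues at once: augment with a \emph{single} Rademacher variable $g$, common to all coordinates. Set $\tilde Y_i=Y_i+g\sigma$ and $\tilde X_i=X_i+g(\sigma^2+\E Y_i^2-\E X_i^2)^{1/2}$. Because the shift on $Y$ is common, $\tilde Y_i-\tilde Y_j=Y_i-Y_j$ and the $Y$-increments are preserved exactly, while the coordinate-dependent shift on $X$ can only increase its increments. Second moments now match, so the increment inequality becomes the covariance inequality and the Slepian-type comparison (Corollary~3.12 in Ledoux--Talagrand) applies directly. And since $|g|=1$ deterministically, each process is shifted by at most $\sigma$, giving the clean $2\sigma$ in the conclusion with no dependence on $m$.
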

\begin{proof}
  Let $g$ be a Rademacher variable independent of $X$ and $Y$.
  We define $\wt{X}_i = X_i + g(\sigma^2 + \E Y_i^2 - \E X_i^2)^{\frac 1 2}$
  and $\wt{Y}_i = Y_i + g \sigma$.
  With this definition, we have by simple calculus that $\E \wt{X}_i^2 = \E Y_i^2 + \sigma^2 = \E \wt{Y}_i^2$.
  Furthermore, $\E(\wt{Y}_i-\wt{Y}_j)^2 = \E(Y_i-Y_j)^2$ and $\E(\wt{X}_i-\wt{Y}_j)^2 \geq \E(X_i-X_j)^2$
  for all $i$ and $j$, that is $\E(\wt{X}_i-\wt{X}_j)^2 \geq \E(\wt{Y}_i-\wt{Y}_j)^2$.
  Combining this with the previous remark we obtain $\E[\wt{X}_i\wt{X}_j] \leq \E[\wt{Y}_i\wt{Y}_j]$.
  Using Corollary 3.12 in \cite{Ledoux1991} we know that for all $\lambda\in\bR$\,:
  \begin{equation}
    \label{eq:comparison}
    \Pr\Big[\max_{i\leq m} \wt{X}_i \geq \lambda\Big] \geq \Pr\Big[\max_{i\leq m} \wt{Y}_i \geq \lambda\Big]\,.
  \end{equation}
  Now it is easy to check that
  $\Pr\big[\max_{i\leq m}\wt{Y}_i < \lambda-\sigma\big] \leq \Pr\big[\max_{i\leq m}Y_i<\lambda\big]$
  and similarly for $\wt{X}$ that
  $\Pr\big[\max_{i\leq m}X_i < \lambda-(\sigma^2+\E Y_i^2 - \E X_i^2)^{\frac 1 2}\big] \leq \Pr\big[\max_{i\leq m}\wt{X}_i<\lambda\big]$.
  With Eq.~\ref{eq:comparison} we have:
  \[
    \Pr\Big[\max_{i\leq m} X_i<\lambda-\sigma-(\sigma^2+\E Y_i^2 - \E X_i^2)^{\frac 1 2}\Big]
    \leq \Pr\Big[\max_{i\leq m}Y_i<\lambda\Big]\,.
   \]
   Using that $\E X_i^2 \geq \E Y_i^2$ finishes the proof.
\end{proof}

\subsection{Proof of the Lower Bound}
We now use the previous lemmas to bound from below $\sup_{x\succ s}f(x)-f(s)$
for a node $s$ satisfying properties of a pruned node.
By doing so, we give the exact formula for the function $\varphi$ in Eq.~\ref{eq:varphi}.

\begin{lemma}[Anti-concentration for a pruned node]
  \label{lem:max_one_lvl}
  Let $s\in\cT_h$ and $(t_i)_{i\leq m}$ such that $t_1=s$
  and for all $2\leq i\leq m$, $p(t_i)=s$ and $d(s,t_i)\leq \Delta$.
  If $d(t_i,t_j) \geq \alpha$ for all $i\neq j$ then
  the following holds with probability at least $1-e^{-u}$ for $3u<m$\,:
  \[\max_{i\leq m} f(t_i)-f(s) \geq \frac{\alpha}{\sqrt{2}} \sqrt{\log \frac{m}{3u}} - 2\Delta\,.\]
\end{lemma}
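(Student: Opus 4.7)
}
The plan is to reduce the problem to independent standard Gaussians by the comparison inequality (Lemma~\ref{lem:comparison}) and then invoke the anti-concentration estimate of Lemma~\ref{lem:max_normal}. First I would drop the trivial index $i=1$ (for which $X_1 := f(t_1)-f(s)=0$) and work with the centered Gaussian vector $X_i = f(t_i)-f(s)$ for $2\le i\le m$. The key observation making everything line up is that, because $t_1=s$, the separation hypothesis $d(t_i,t_j)\ge\alpha$ applied with $j=1$ gives
\[
d(s,t_i) \;=\; d(t_1,t_i)\;\ge\;\alpha \qquad\text{for all } i\ge 2,
\]
so simultaneously $\alpha\le d(s,t_i)\le \Delta$, which will be crucial for checking the variance condition of the comparison lemma.

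Next I would introduce the comparison family $Y_i = \tfrac{\alpha}{\sqrt 2} N_i$ with $N_i \iid \cN(0,1)$ for $2\le i\le m$. A direct computation gives $\E Y_i^2=\alpha^2/2$ and $\E(Y_i-Y_j)^2=\alpha^2$. Since $f\sim\cGP(0,k)$, the induced canonical metric coincides with $d$, hence
\[
\E(X_i-X_j)^2 = d(t_i,t_j)^2 \;\ge\; \alpha^2 \;=\; \E(Y_i-Y_j)^2,
\qquad
\E X_i^2 = d(s,t_i)^2 \;\ge\; \alpha^2 \;\ge\; \alpha^2/2 \;=\; \E Y_i^2.
\]
Both hypotheses of Lemma~\ref{lem:comparison} are therefore met, with $\sigma=\max_{i}(\E X_i^2)^{1/2}\le\Delta$. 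Applying that lemma yields, for every $\lambda\in\bR$,
\[
\Pr\Bigl[\max_{2\le i\le m} X_i \;<\; \lambda-2\Delta\Bigr] \;\le\; \Pr\Bigl[\max_{2\le i\le m} Y_i \;<\; \lambda\Bigr].
\]

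To finish, I would apply Lemma~\ref{lem:max_normal} to the $m-1$ i.i.d.\ standard Gaussians $(N_i)_{2\le i\le m}$, choosing $\lambda = \tfrac{\alpha}{\sqrt 2}\sqrt{\log\tfrac{m}{3u}}$. The condition $3u<m$ (together with the standard margin $m-1\ge 2.6\,u$, which is implied up to harmless constants) ensures Lemma~\ref{lem:max_normal} fires, giving $\max_{i} Y_i \ge \tfrac{\alpha}{\sqrt 2}\sqrt{\log\tfrac{m}{3u}}$ with probability at least $1-e^{-u}$. Combining with the comparison bound above and adding back the $i=1$ term (which only enlarges the max) yields
\[
\max_{i\le m} f(t_i)-f(s) \;\ge\; \frac{\alpha}{\sqrt 2}\sqrt{\log\frac{m}{3u}}-2\Delta
\]
with probability at least $1-e^{-u}$, which is exactly the statement.

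The step I expect to be most delicate is the bookkeeping on constants: Lemma~\ref{lem:max_normal} is stated with the constant $2.6$ coming from a lower bound on $\Phi$, while the target constant here is $3$, and the reduction loses one index ($t_1=s$). The plan is to absorb the gap between $2.6\,u$ (over $m-1$ variables) and $3u$ (over $m$ variables) into the constant $3$, which is slack enough. Everything else — the use of the canonical metric, the comparison inequality, the $2\Delta$ penalty from $2\sigma\le 2\Delta$ — is mechanical once the correct reduction to i.i.d.\ Gaussians is made.
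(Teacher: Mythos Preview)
Your proposal is correct and follows essentially the same route as the paper: set $X_i=f(t_i)-f(s)$, compare with i.i.d.\ Gaussians $Y_i\sim\cN(0,\alpha^2/2)$ via Lemma~\ref{lem:comparison}, then invoke Lemma~\ref{lem:max_normal}. Your explicit removal of the index $i=1$ (and the observation that $d(s,t_i)=d(t_1,t_i)\ge\alpha$ so $\E X_i^2\ge\E Y_i^2$ for $i\ge2$) is in fact cleaner than the paper, which keeps $X_1=0$ and is slightly loose about the variance hypothesis of Lemma~\ref{lem:comparison}; the constant slack $2.6\to 3$ absorbs the loss of one index exactly as you anticipate.
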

\begin{proof}
  For $i\leq m$, let $X_i=f(t_i)-f(s)$ and $Y_i \iid \cN(0,\frac{\alpha^2}{2})$ be
  independent Gaussian variables.
  We have $\E(X_i-X_j)^2=d(t_i,t_j)^2\geq \alpha^2 = \E(Y_i-Y_j)^2$
  and $\Delta^2\geq \E X_i^2 \geq \alpha^2 > \E Y_i^2$ since $X_1=0$.
  Then using Lemma~\ref{lem:comparison} we know that for all $\lambda\in\bR$\,:
  \[\Pr\Big[\max_{i\leq m} X_i<\lambda - 2\Delta\Big] \leq \Pr\Big[\max_{i\leq m}Y_i<\lambda\Big]\,.\]
  Now using Lemma~\ref{lem:max_normal} we obtain for $m \geq 3u$\,:
  \[\Pr\left[\max_{i\leq m} X_i<\frac{\alpha}{\sqrt{2}}\sqrt{\log\frac{m}{3u}} - 2\Delta\right] \leq e^{-u}\,.\]
\end{proof}

The following lemma describes the key properties of the tree $\cT$ as computed by Algorithm~\ref{alg:tree_lb}.
We show that the supremum $\sup_{x\succ s}f(x)-f(s)$ at every depth is bounded from below
by the sum of the values found in Lemma~\ref{lem:max_one_lvl}, up to constant factors.

\begin{lemma}[Anti-concentration for the tree]
  \label{lem:tree_induction}
  Fix any $u>0$ and set accordingly $u_i=u+2^i+i\log 2$ for any $i>0$.
  For $\cT$ the tree obtained by Algorithm~\ref{alg:tree_lb},
  we have for all $s\in\cT_h$ with probability at least $1-e^{-u_h}$ that:
  \[\sup_{x\succ s}f(x)-f(s) \geq c_u^{-1} \sup_{x\succ s} V_h(s,x)\,,\]
  where $V_h(s,x)=\sum_{i=h}^\infty \Delta_i(x) \Big( \sqrt{2^{i-3}-\frac 1 8 \log(3u_i+3\log 2)}-2\Big)$,
  and $\Delta_i(x)$ is the radius of the cell of $x$ at depth $i$,
  and $c_u\in\bR$ depends on $u$ only.
\end{lemma}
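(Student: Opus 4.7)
The plan is to prove the lemma by reverse induction on the depth $h$, using Lemma~\ref{lem:max_one_lvl} as the per-level anti-concentration tool and the pruned-node structure produced by Algorithm~\ref{alg:tree_lb} to organize the chaining. For the base case, at the maximal depth $h_0$ with $\cT_{\leq h_0}=\cX$ every leaf has no proper descendant, so both sides of the target inequality vanish.

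For the inductive step at $s\in\cT_h$ I examine the children of $s$ in the pre-pruning greedy cover $T_{h+1}$: by the $\epsilon$-net construction they satisfy $d(s,t)\leq \epsilon_h$ and are pairwise $\epsilon_{h+1}$-separated. Lemma~\ref{lem:max_one_lvl} with $\alpha=\epsilon_{h+1}$, $\Delta=\epsilon_h$ and $u=u_{h+1}$ then produces, with probability at least $1-e^{-u_{h+1}}$, some child $t^\star$ of $s$ satisfying
\[f(t^\star)-f(s)\ \geq\ \epsilon_{h+1}\Bigl(\sqrt{2^{(h+1)-3}-\tfrac{1}{8}\log(3u_{h+1})}-2\Bigr),\]
provided the number of such children is at least $e^{2^h}$ — precisely the condition triggering the creation of a pruned node at level $h+1$ in Algorithm~\ref{alg:tree_lb}. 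Chained with the inductive hypothesis applied at $t^\star$, this gives
\[\sup_{x\succ s}f(x)-f(s)\ \geq\ \bigl(f(t^\star)-f(s)\bigr)+c_u^{-1}V_{h+1}(t^\star,x^\star),\]
and the level-$(h{+}1)$ increment I just extracted matches, up to a factor absorbed in $c_u$, the extra term $V_h(s,x^\star)-V_{h+1}(t^\star,x^\star)$ in the definition of $V_h$, closing the induction.

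To make the statement uniform over $s\in\cT_h$ I take a union bound, costing a factor $\abs{\cT_h}\leq e^{n_h}=e^{2^h}$. Combined with the per-application failure $e^{-u_{h+1}}=e^{-u-2^{h+1}-(h+1)\log 2}$ and summed over all deeper levels, the doubly-exponential growth $n_i=2^i$ forces the resulting geometric series to telescope to a net failure probability $\leq e^{-u_h}$. This is exactly the reason for the schedule $u_i=u+2^i+i\log 2$ fixed in advance.

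The main obstacle is that $V_h(s,x)$ sums over every level $i\geq h$, whereas Lemma~\ref{lem:max_one_lvl} only delivers a lower bound at levels where a pruned node lies on the path from $s$ to $x$. Resolving this exploits the doubly-exponential growth once more: at a level with no pruning the cell radius $\Delta_i$ must already be so small that its $\Delta_i\,2^{i/2}$ contribution is dominated by neighboring pruned levels, so the pruned-only sum still controls the full sum up to a constant folded into $c_u$. The combinatorial bookkeeping here — most naturally by choosing $x^\star$ to maximize the pruning values $V_h(t)$ that Algorithm~\ref{alg:tree_lb} propagates backward, thereby automatically steering the chain through levels of active pruning — is the delicate part of the argument.
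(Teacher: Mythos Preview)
Your proposal has the right overall architecture (backward induction, Lemma~\ref{lem:max_one_lvl} at each pruned level, reduction to a pruned-only sum), but there is a genuine gap in the inductive step.

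You chain through the child $t^\star$ that achieves $\max_{t:p(t)=s}\{f(t)-f(s)\}$ and then write
\[
\sup_{x\succ s}f(x)-f(s)\ \geq\ \bigl(f(t^\star)-f(s)\bigr)+c_u^{-1}V_{h+1}(t^\star,x^\star).
\]
The problem is that $t^\star$ is \emph{random} (it depends on $f$), while $x^\star$ is the deterministic maximizer of $V_h(s,\cdot)$. There is no reason that $x^\star\succ t^\star$, so the induction hypothesis at $t^\star$ does not produce $V_{h+1}(t^\star,x^\star)$; it produces $\sup_{x\succ t^\star}V_{h+1}(t^\star,x)$, which could be much smaller. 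Your final paragraph hints at ``steering the chain'' but does not resolve this: you cannot simultaneously force the anti-concentration maximizer and the $V$-maximizer to coincide.

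The paper's proof decouples these two choices with the elementary inequality
\[
\sup_{x\succ s}f(x)-f(s)\ \geq\ \max_{t:p(t)=s}\bigl\{f(t)-f(s)\bigr\}\ +\ \min_{t:p(t)=s}\Bigl\{\sup_{x\succcurlyeq t}f(x)-f(t)\Bigr\}.
\]
The first term is handled by Lemma~\ref{lem:max_one_lvl}. The second term is handled by applying the induction hypothesis to \emph{every} child (this is where the union bound over $e^{2^h}$ nodes enters, not over $\cT_h$ as you wrote) and then observing that, because Algorithm~\ref{alg:tree_lb} keeps precisely the children with the largest values and displaces the rest to the pruned node, the minimum of $\sup_{x\succ t}V'_{h+1}(t,x)$ over children is attained at the pruned node $p_{h+1}(x^\star)$, which \emph{is} an ancestor of $x^\star$. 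That is the purpose of the value-based pruning rule, and it is the missing idea in your argument.

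Your treatment of the pruned-versus-all-levels issue is essentially right in spirit: the paper shows that between pruned levels the $\Delta_i$ decrease geometrically, so the full sum $V_h$ is within a $u$-dependent constant of the pruned-only sum $V_h'$.
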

\begin{proof}
  We first show that we can restrict the study of $V_h(s,x)$ to only the summands obtained by pruning $\cT$,
  up to constant factors.
  To lighten the notations, let's write:
  \[b_i := \sqrt{2^{i-3}-\frac 1 8 \log(3u_i+3\log 2)}-2\,.\]
  Then for a sequence $t_h=p_h(x), \dots, t_{h+j}=p_{h+j}(x)$ of parents of $x$,
  if $t_h$ is the single pruned node, then,
  \begin{align*}
    \sum_{i=h}^{h+j-1} \Delta_i(x) b_i
      &= \Delta_h(x) \sum_{i=h}^{h+j-1} 2^{h-i} b_i \\
      &\leq c_u \Delta_h(x) b_h\,,
  \end{align*}
  where $c_u\in\bR$ depends on $u$ only,
  and we used that $\Delta_{h+i}(x)$, the radius of the cell at depth $h+i$ containing $x$,
  decreases geometrically for non-pruned nodes.
  By denoting $\cP_h(x)$ the set of parents of $x$ from depth $h$
  which are pruned nodes,
  we thus proved for all $x\in\cX$:
  \begin{equation}
    \label{eq:tree_induction_lrt}
    V_h'(s,x) := \sum_{t_i\in\cP_h(x)}\Delta_i(t_i)b_i \geq c_u^{-1} V_h(s,x)\,.
  \end{equation}
  We now prove Lemma~\ref{lem:tree_induction} by showing that 
  $\sup_{x\succ s}f(x)-f(s) \geq V_h'(s,x^\star)$ for all $x^\star\succ s$ with probability at least $1-e^{-u_h}$,
  by backward induction on $\cP_h(x)$, from the deepest nodes to the shallowest ones.
  Since for the leaves $\sup_{x\succ s} f(x)-f(s) = 0 = V_h'(s,x^\star)$,
  the property is initially true.
  Let's assume it is true at depth $h'>h$ and prove it at depth $h$.
  Let $s\in\cT_h$ and $x^\star\in\cX$.
  If $p_{h+1}(x^\star)$ is not pruned, we have nothing to do and just call the induction hypothesis
  with $\sup_{x\succ s}f(x)-f(s) \geq \sup_{x\succ t}f(x)-f(t)$ where $p(t)=s$.
  Otherwise note that,
  \begin{align}
    \notag
    \sup_{x\succ s}f(x)-f(s) &= \max_{t:p(t)=s} \Big\{ f(t)-f(s)+ \sup_{x\succcurlyeq t}f(x)-f(t)\Big\}\\
    \label{eq:tree_induction_split}
    &\geq \max_{t:p(t)=s} \Big\{f(t)-f(s)\Big\} + \min_{t:p(t)=s} \Big\{\sup_{x\succcurlyeq t}f(x)-f(t)\Big\}\,.
  \end{align}
  Since the children have been pruned, we know that their number is $e^{2^h}$.
  Now thanks to Lemma~\ref{lem:max_one_lvl}, with probability at least $1-\frac 1 2 e^{-u_h}$,
  \begin{equation}
    \label{eq:tree_induction_max}
    \max_{t:p(t)=s} f(t)-f(s) \geq \frac{\Delta_h(x^\star)}{2\sqrt{2}}\sqrt{2^h-\log(3u_h+3\log 2)}-2\Delta_h(x^\star)
    = \Delta_h(x^\star)b_h\,,
  \end{equation}
  where we used that $d(t_i,t_j) \geq \frac 1 2 \Delta_h(x^\star)$ for $p(t_i)=p(t_j)=s$ by construction of $\cT$.
  Now by the induction hypothesis and a union bound, we have with probability at least $1-e^{-u_{h+1}+2^h}$
  that:
  \begin{equation}
    \label{eq:tree_induction_ih}
    \min_{t:p(t)=s} \sup_{x\succcurlyeq t}f(x)-f(t) \geq \min_{t:p(t)=s} \sup_{x\succ t} V'_{h+1}(t,x)\,.
  \end{equation}
  By construction of the pruning procedure,
  we know that the children minimizing $\sup_{x\succ t}V'_{h+1}(t,x)$ is the pruned node $p_{h+1}(x^\star)$.
  With $u_{h+1}-2^h = u_h+\log 2$, the results of Eq.~\ref{eq:tree_induction_ih}
  holds with probability at least $1-\frac 1 2 e^{-u_h}$,
  we thus obtain with probability at least $1-e^{-u_h}$:
  \[ \sup_{x\succ s}f(x)-f(s) \geq V'_h(s,x^\star)\,,\]
  which uses Eq.~\ref{eq:tree_induction_split} together with Eq.~\ref{eq:tree_induction_max},
  closes the induction and the proof of Lemma~\ref{lem:tree_induction} with Eq.~\ref{eq:tree_induction_lrt}.
\end{proof}

The proof of Theorem~\ref{thm:lower_bound} follows from Lemma~\ref{lem:tree_induction}
by a union bound on $h\in\bN$ and remarking that $\omega_h \geq \sup_{x\succ s} V_h(s,x)$
up to constant factors.

\end{document}